\def\isarxiv{1} %%% for icml submission version, we comment this line

\ifdefined\isarxiv
\documentclass[11pt]{article}

\usepackage[numbers]{natbib}

\else

\documentclass[twoside]{article}

\usepackage{aistats2025}
% If your paper is accepted, change the options for the package
% aistats2025 as follows:
%
%\usepackage[accepted]{aistats2025}
%
% This option will print headings for the title of your paper and
% headings for the authors names, plus a copyright note at the end of
% the first column of the first page.

% If you set papersize explicitly, activate the following three lines:
%\special{papersize = 8.5in, 11in}
%\setlength{\pdfpageheight}{11in}
%\setlength{\pdfpagewidth}{8.5in}

% If you use natbib package, activate the following three lines:
\usepackage[round]{natbib}

% If you use BibTeX in apalike style, activate the following line:
%\bibliographystyle{apalike}

\fi

\usepackage{amsmath}
\usepackage{amsthm}
\usepackage{amssymb}
\usepackage{algorithm}
\usepackage{subfig}
\usepackage{algpseudocode}
\usepackage{graphicx}
\usepackage{grffile}
\usepackage{wrapfig,epsfig}
\usepackage{url}
\usepackage{xcolor}
\usepackage{epstopdf}

\usepackage{bbm}
\usepackage{dsfont}

\allowdisplaybreaks

\ifdefined\isarxiv

\usepackage{tikz}
\usepackage{hyperref}  %%% arxiv don't allow this.
\hypersetup{colorlinks=true,citecolor=blue,linkcolor=blue} %%% Zhao : maybe we should comment this in submission.
\usetikzlibrary{arrows}
\usepackage[margin=1in]{geometry}

\else

\usepackage{hyperref}
% \definecolor{mydarkblue}{rgb}{0,0.08,0.45}
% \hypersetup{colorlinks=true, citecolor=mydarkblue,linkcolor=mydarkblue}

\fi
 
\graphicspath{{./figs/}}

\theoremstyle{plain}
\newtheorem{theorem}{Theorem}[section]
\newtheorem{lemma}[theorem]{Lemma}
\newtheorem{definition}[theorem]{Definition}

\newtheorem{corollary}[theorem]{Corollary}

\newtheorem{assumption}[theorem]{Assumption}

\newtheorem{fact}[theorem]{Fact}

\newcommand{\wh}{\widehat}
\newcommand{\wt}{\widetilde}

\newcommand{\N}{\mathcal{N}}
\newcommand{\R}{\mathbb{R}}

\DeclareMathOperator*{\E}{{\mathbb{E}}}

\DeclareMathOperator{\rank}{rank}
\DeclareMathOperator{\diag}{diag}

\DeclareMathOperator{\tr}{tr}

\DeclareMathOperator*{\argmin}{arg\,min}

\newcommand{\neff}{n_{\mathtt{eff}}} % new command
\newcommand{\irre}{\mathcal{L}_\mathtt{Irred}}
\newcommand{\appr}{\mathcal{L}_\mathtt{Approx}}
\newcommand{\excess}{\mathcal{L}_\mathtt{Excess}}
\newcommand{\bias}{\mathtt{Bias}}
\newcommand{\vari}{\mathtt{Var}}

\newcommand{\loss}{\mathcal{L}}

\makeatletter
\newcommand*{\RN}[1]{\expandafter\@slowromancap\romannumeral #1@}
\makeatother

 %%%Change to intern name

\usepackage{lineno}

\begin{document}

\ifdefined\isarxiv

\date{}

\title{
% Exploring the Scaling Law in Multiple Regression and Kernel Regression
Scaling Law Phenomena Across Regression Paradigms: Multiple and Kernel Approaches
}

%%%Zhao:somone please fix all of our names and affiliations and emails before uploading arxiv

\author{
Yifang Chen\thanks{\texttt{
yifangc@uchicago.edu}. The University of Chicago.}
\and
Xuyang Guo\thanks{\texttt{ gxy1907362699@gmail.com}. Guilin University of Electronic Technology.}
\and
Xiaoyu Li\thanks{\texttt{
xiaoyu.li2@student.unsw.edu.au}. University of New South Wales.}
\and
Yingyu Liang\thanks{\texttt{
yingyul@hku.hk}. The University of Hong Kong. \texttt{
yliang@cs.wisc.edu}. University of Wisconsin-Madison.} 
\and
Zhenmei Shi\thanks{\texttt{
zhmeishi@cs.wisc.edu}. University of Wisconsin-Madison.}
\and
Zhao Song\thanks{\texttt{ magic.linuxkde@gmail.com}. The Simons Institute for the Theory of Computing at UC Berkeley.}
}

\else

\twocolumn[

\aistatstitle{
% Exploring the Scaling Law in Multiple Regression and Kernel Regression
Scaling Law Phenomena Across Regression Paradigms: Multiple and Kernel Approaches
}

\aistatsauthor{ Author 1 \And Author 2 \And  Author 3 }

\aistatsaddress{ Institution 1 \And  Institution 2 \And Institution 3 } ]

\fi

\ifdefined\isarxiv
\begin{titlepage}
  \maketitle
  \begin{abstract}
Recently, Large Language Models (LLMs) have achieved remarkable success. A key factor behind this success is the scaling law observed by OpenAI. Specifically, for models with Transformer architecture, the test loss exhibits a power-law relationship with model size, dataset size, and the amount of computation used in training, demonstrating trends that span more than seven orders of magnitude. This scaling law challenges traditional machine learning wisdom, notably the Oscar Scissors principle, which suggests that an overparametrized algorithm will overfit the training datasets, resulting in poor test performance. Recent research has also identified the scaling law in simpler machine learning contexts, such as linear regression. However, fully explaining the scaling law in large practical models remains an elusive goal. In this work, we advance our understanding by demonstrating that the scaling law phenomenon extends to multiple regression and kernel regression settings, which are significantly more expressive and powerful than linear methods. Our analysis provides deeper insights into the scaling law, potentially enhancing our understanding of LLMs.

  \end{abstract}
  \thispagestyle{empty}
\end{titlepage}

{\hypersetup{linkcolor=black}
\tableofcontents
}
\newpage

\else

\begin{abstract}

\end{abstract}

\fi

\section{Introduction}

The rapid advancement of large language models (LLMs) has revolutionized natural language processing, enabling remarkable generalization across diverse applications~\cite{bmr+20,gpt4}. Transformer-based architectures~\cite{vsp+17}, when scaled to billions of parameters and trained on vast datasets, have proven to be highly effective for numerous NLP tasks. These models, such as BERT~\cite{dclt18}, PaLM~\cite{cnd+23}, Llama~\cite{tli+23}, and GPT-4~\cite{gpt4}, have benefited from methods like multitask fine-tuning~\cite{rsr+20}, prompt tuning~\cite{lac21}, and reinforcement learning from human feedback (RLHF)~\cite{clb+17}, among others. Alongside these empirical advances, scaling laws have been extensively studied~\cite{kmh+20}, revealing the relationship between generalization error and key factors such as compute, sample size, and model parameters~\cite{lwk+24}. These scaling insights have provided guidelines for optimally allocating resources in training deep learning models.

Multiple regression, which predicts vector-valued outputs simultaneously, offers crucial insights for multi-task learning systems. In contrast, kernel regression extends these concepts through non-linear feature mappings that enable more expressive modeling. Recent research has explored both approaches across various regimes, addressing phenomena like double descent and overparameterization~\cite{bhmm19}. Notably, connections between kernel methods and neural networks have been drawn, with insights from the Neural Tangent Kernel (NTK)\cite{jgh18} highlighting similarities between kernel regression and training infinitely wide neural networks.
Thus, it is natural to ask, 

\begin{center}
   {\it Can we extend the error bound from linear regression to multiple and kernel regressions? }
\end{center}

While previous research has established scaling laws for simple linear models~\cite{lwk+24}, the extension to multiple regression settings—where we predict multiple output variables simultaneously—represents a significant advancement in understanding more complex learning paradigms. This work demonstrates that scaling law phenomena extend robustly to multiple regression settings, where we predict vector-valued outputs rather than scalar values. Our analysis reveals that under appropriate assumptions of Gaussianity, well-specified models, and power law decay of eigenvalues in the feature covariance matrix, multiple regression exhibits predictable scaling behaviors similar to those observed in simpler models. This extension to multiple regression settings offers crucial insights for multi-task learning systems. It serves as a bridge between simple linear models and the more expressive kernel methods that we subsequently analyze.

Our primary contributions are as follows:
\begin{itemize}
    \item Deriving the scaling law for multiple regression and kernel regression (see Theorem~\ref{thm:main}).
    \item Presenting novel generalization error bounds for sketched multiple regression (see Theorem~\ref{thm:error_multiregression}).
    \item Presenting novel generalization error bounds for sketched kernel regression (see Corollary~\ref{cor:gaussianity:kernel}).
\end{itemize} 

Our work leverages stochastic gradient descent (SGD) under specific assumptions. These bounds provide insight into the bias-variance trade-off in the sketched predictor setting and help understand the effectiveness of sketching in reducing computational complexity while maintaining predictive accuracy. Furthermore, we derived the scaling law for both the multiple regression and kernel regression based on our generalization error bounds. 

\paragraph{Roadmap.} 
Section~\ref{sec:related_work} discusses the work on scaling laws of Large Language Models and regression problems. 
In Section~\ref{sec:prelim}, we present the notation, definitions, and foundational concepts necessary for understanding their theoretical analysis of multiple and kernel regression.
Section~\ref{sec:main_results} shows our key theoretical findings on scaling laws for multiple regression and kernel regression, including generalization error bounds.
Section~\ref{sec:approx_error} provides detailed mathematical analysis of approximation errors in multiple regression with upper and lower bounds.
In Section~\ref{sec:bias_error}, we derive the bias component of error in multiple regression scenarios with both upper and lower bounds.
In Section~\ref{sec:excess}, we investigate excess errors in multiple regression by examining hypercontractivity, Gaussianity, and well-specified noise under sketched features.
In Section~\ref{sec:conclusion}, we summarize our findings on scaling laws in multiple and kernel regression and discuss implications for understanding generalization in more complex machine learning models. %%% Section 1. Introduction
\section{Related Work}\label{sec:related_work}

\subsection{Scaling Laws}

The scaling behavior of deep learning models concerning compute, sample size, and model size has been a central topic in machine learning research~\cite{hna+17, rrbs19, bmr+20, hkk+20, hbm+22, zkhb22, mrb+23}. A pivotal contribution by~\cite{kmh+20} revealed that generalization error in transformers decays as a power law with respect to these three factors. Their work provided joint formulas predicting how model performance improves with increased compute, data, and parameters. This finding highlighted the importance of scaling models and led to a surge in interest in large-scale models for NLP and other tasks.

Further research refined these scaling laws. For instance,~\cite{hbm+22} proposed the Chinchilla law, which suggested that under compute constraints, balancing data size with model size yields optimal performance, contrasting with earlier models that focused primarily on increasing parameter counts. Meanwhile,~\cite{mrb+23} introduced methods for data reuse across training passes, showing that efficient data handling can maintain performance while lowering the need for excessive compute resources. These empirical studies have provided clearer guidelines for allocating resources effectively when training large models.

In parallel, theoretical work has advanced understanding of these scaling laws.~\cite{sk20} demonstrated that, in regression, the generalization error scales as $n^{-4/d}$, linking model performance to the intrinsic dimensionality of the data. Additionally, other researchers, such as~\cite{bdk+21} and~\cite{bap24}, applied statistical physics to derive scaling laws for linear models, particularly under infinite parameter regimes. Theoretical contributions like these complement empirical findings and offer deeper insights into how models behave as they scale, shaping future directions in machine learning research.

\subsection{Regression Problems}

Regression problems are a very important research topic: they model the dependent variable $Y$ using one or more independent variables $X$. Recent studies have explored various regression problems. One of the most prominent regression problems is linear regression \cite{syz23_quantum,syyz23_ellinf}. Linear regression deals with a single independent variable, whereas multiple regression, $\min_{X \in \R^{d \times N}} \| A X - B \|_F$ \cite{syz23_quantum} given $A \in \R^{n \times d}$ and $B \in \R^{n \times N}$, incorporates two or more predictors to improve prediction accuracy. They have been extensively studied across various domains, including statistics \cite{www+22}, economics \cite{zkh22,an23}, machine learning \cite{sz21,pmp+22,lph+24}, and biomedical sciences \cite{tsg+22,wkc+23}. Other famous regression problems include ridge regression \cite{syz23_quantum,gls+25}, softmax regression \cite{dls23,gsy23_hyper,swy23,lswy23,kls+25}, weighted low-rank approximation \cite{syyz23_weighted,gsyz23,llss25}, leverage scores \cite{lsw+24,lsxy24}, and attention regression \cite{syz23,gsy23_coin,gswy23}. Kernel regression remains a cornerstone in statistical learning, with numerous recent studies delving into its generalization properties in various scenarios.~\cite{cf21} analyzed the generalization error rates, noting the transition between noiseless and noisy regimes.~\cite{mvk24} studied the overfitting tendencies of Gaussian kernel ridgeless regression under changes in bandwidth and dimensionality. 

Additional work by~\cite{mm19} provided precise asymptotic expressions for generalization error in random features regression, highlighting the double descent phenomenon.~\cite{bhmm19} discussed the reconciliation of classical bias-variance trade-off theory with modern machine learning practices, particularly focusing on overparameterization in kernel methods.~\cite{hmrt19} further explored ridgeless least squares interpolation in high dimensions, uncovering unexpected aspects of its generalization performance. The connection between kernel methods and neural networks has also been of interest.~\cite{jgh18} introduced the Neural Tangent Kernel (NTK), showing that training infinitely wide neural networks equates to kernel regression with the NTK, an insight extended by~\cite{ll18,dzps18,adh+19,all19,als19_icml,als19_rnn}, who demonstrated exact computations under the NTK framework.

These studies collectively enhance our understanding of kernel regression's theoretical and practical facets, informing the development of models with improved generalization properties.
\section{Preliminaries}\label{sec:prelim}
In this section, we introduce some statements to better understand our work. 
In Section~\ref{sec:notations}, we introduce the notation system in our paper. 
Section~\ref{sec:useful_lemma} introduces some important definitions and lemmas used in our paper. This section provides the core definitions and lemmas that formulate our problem statements.
In Section~\ref{sec:linear_regression}, we provide the generalization error of linear regression with and without sketching.
In Section~\ref{sec:multi_regression}, we provide the generalization error of multiple regression with and without sketching.
In Section~\ref{sec:kernel_regression}, we delve into the problem setup and the key theoretical concepts for kernel regression. 
In Section~\ref{sec:sgd_error_decomp}, we define the one-pass SGD for multiple regression and decompose generalization error.

\subsection{Notations}\label{sec:notations}
For any positive integer $n$, we use $[n]$ to denote set $\{1,2,\cdots, n\}$.  
For an event $C$, $\Pr[C]$ represents the probability of event $C$ occurring. $\mathbb{E}[X]$ represents the expected value (or mean) of a random variable $X$. 
For each $a, b \in \R^n$, we use $a \circ b \in \R^n$ to denote the Hadamard product, i.e., the $i$-th entry of $(a\circ b)$ is $a_i b_i$ for all $i \in [n]$.
For $A \in \R^{m \times n}$, let $A_i \in \R^n$ denote the $i$-th row and $A_{*,j} \in \R^m$ denote the $j$-th column of $A$, where $i \in [m]$ and $j \in [n]$.
We use $\|A\|_F$ to denote the Frobenius norm of a matrix $A \in \R^{n \times d}$, i.e., $\|A\|_F := \sqrt{\sum_{i \in [n]} \sum_{j \in [d]} |A_{i,j}|^2}$.
For a symmetric matrix $A \in \R^{n \times n}$, $A \succeq 0$ means that $A$ is positive semidefinite (PSD), i.e., for all $x \in \R^n$, we have $x^\top A x \geq 0$.
For any positive semidefinite matrix $A \in \R^{d\times d}$, for any $w \in \R^d$, we define $\|w\|_A := \sqrt{w^\top A w}$. For a matrix $A$, we use $\lambda_i(A)$ to denote its $i$-th eigenvalue, and we use $\rank(A)$ to denote its rank. 

\subsection{Useful Definitions and Lemmas}\label{sec:useful_lemma}

In this section, we introduce the statements that help understand our work.

We begin with some basic definitions of Gaussian distributions, which form the foundation of our statistical approach.

\begin{definition}[Gaussian Distribution] A Gaussian distribution (or normal distribution) is a continuous probability distribution defined by its mean $\mu $ and variance $ \sigma^2$. The probability density function (PDF) is:
\begin{align*}
    f(x) = \frac{1}{\sqrt{2\pi \sigma^2}} \exp \left( -\frac{(x - \mu)^2}{2\sigma^2} \right).
\end{align*}
\end{definition}

Next, we extend this concept to the multivariate case, essential for analyzing high-dimensional data in our regression models.

\begin{definition}[Multivariate Gaussian Distribution]
A multivariate Gaussian distribution is a generalization of the Gaussian distribution to multiple dimensions. It is defined by a mean vector $ \mu \in \mathbb{R}^d $ and a covariance matrix $ \Sigma \in \mathbb{R}^{d \times d} $. The probability density function (PDF) for a random vector $  x \in \mathbb{R}^d $  is:

\begin{align*}
    f(x) = \frac{1}{\sqrt{(2\pi)^d \det(\Sigma)}} \exp \left( -\frac{1}{2} (x - \mu)^\top \Sigma^{-1} (x - \mu) \right),
\end{align*}
\end{definition}

Then we introduce Isserlis' Theorem, which provides a formula for the expectation of the product of Gaussian random variables, expressed in pairwise expectations.

\begin{lemma}[Isserlis' Theorem, Theorem 1.1, Page 1 in~\cite{mnbo09}]\label{lem:isserlis}
Let $x :=[x_1, x_2, \ldots, x_n] $ be a zero-mean Gaussian random vector. Let $P^2_n$ be the set of all distinct ways of partitioning $\{1, \ldots, n\}$ into pairs.
Then
\begin{align*}
    \E_x [x_1x_2\cdots x_n] = \sum_{p \in P_n^2} \prod_{\{i,j\} \in p} \E_{x} [x_ix_j].
\end{align*}
    
\end{lemma}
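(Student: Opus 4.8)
The plan is to prove Isserlis' theorem via the moment/characteristic generating function of the multivariate Gaussian, reducing the statement to a purely combinatorial identity about pair partitions. Since the claim concerns only the joint law of $(x_1,\ldots,x_n)$, we may take $x$ to have covariance $\Sigma \in \R^{n\times n}$ with $\Sigma_{ij} = \E[x_i x_j]$, and recall (or cite) that the characteristic function is $\varphi(t) = \E[\exp(\i\, t^\top x)] = \exp(-\tfrac12 t^\top \Sigma t)$ for $t \in \R^n$. Because $\varphi$ is entire with an absolutely convergent Taylor series, the mixed moment is recovered by $\E[x_1 x_2 \cdots x_n] = \i^{-n}\, \partial_{t_1}\partial_{t_2}\cdots\partial_{t_n}\, \varphi(t)\big|_{t=0}$, i.e. as the coefficient of the squarefree monomial $t_1 t_2 \cdots t_n$ (up to the factor $\i^{-n}\cdot n!/1$ bookkeeping that I would track carefully).

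First I would expand $\varphi(t) = \sum_{k\ge 0} \frac{1}{k!}\big(-\tfrac12 \sum_{i,j\in[n]} \Sigma_{ij} t_i t_j\big)^{k}$ and observe that applying $\partial_{t_1}\cdots\partial_{t_n}$ and setting $t=0$ extracts exactly the coefficient of $t_1 t_2 \cdots t_n$. Only the term with $2k = n$ contributes; in particular, if $n$ is odd both sides vanish (consistent with $P_n^2 = \emptyset$ and with the $x \mapsto -x$ symmetry), while if $n = 2m$ the relevant term is $\frac{1}{m!\,2^m}\big(\sum_{i,j}\Sigma_{ij} t_i t_j\big)^m$ (the $(-1)^m$ and the $\i^{-n}$ combine to $+1$).

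Next I would carry out the combinatorics: expanding $\big(\sum_{i,j}\Sigma_{ij} t_i t_j\big)^m$ and collecting the coefficient of $t_1\cdots t_{2m}$ amounts to summing over ordered tuples of $m$ ordered pairs whose union (with multiplicity) is $\{1,\ldots,2m\}$, each contributing $\prod_\ell \Sigma_{i_\ell j_\ell}$. Grouping these ordered tuples by the underlying unordered pair partition $p \in P_{2m}^2$, each $p$ arises from exactly $m!\cdot 2^m$ ordered tuples (the $m!$ orderings of the $m$ blocks times the $2^m$ orderings within the blocks, using $\Sigma_{ij} = \Sigma_{ji}$). This factor cancels the prefactor $\frac{1}{m!\,2^m}$, leaving $\sum_{p\in P_{2m}^2} \prod_{\{i,j\}\in p}\Sigma_{ij}$, which is the claimed formula after substituting $\Sigma_{ij} = \E[x_i x_j]$.

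The main obstacle I anticipate is making the last step airtight — the "group ordered tuples by their pair partition and count" argument — rather than any analytic subtlety, since working with $\varphi$ rather than the moment generating function sidesteps convergence issues entirely. As a fallback that avoids power series, I would instead induct on $n$ using Gaussian integration by parts (Stein's identity) $\E[x_1 g(x)] = \sum_{j} \Sigma_{1j}\,\E[\partial_j g(x)]$ with $g(x) = x_2\cdots x_n$, which yields the recursion $\E[x_1\cdots x_n] = \sum_{j=2}^n \E[x_1 x_j]\,\E\big[\prod_{\ell \ne 1, j} x_\ell\big]$; since every $p \in P_n^2$ is obtained uniquely by choosing the partner $j$ of the element $1$ and then pair-partitioning the remaining $n-2$ indices, the inductive hypothesis closes the argument, with the odd case again handled by the reflection symmetry.
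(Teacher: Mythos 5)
Your proposal is correct, but note that the paper does not prove this lemma at all: it is imported verbatim as Theorem 1.1 of the cited reference \cite{mnbo09} and used as a black box, so there is no in-paper argument to compare against. Your primary route — extracting $\E[x_1\cdots x_n]$ as $\i^{-n}\partial_{t_1}\cdots\partial_{t_n}\exp(-\tfrac12 t^\top\Sigma t)\big|_{t=0}$, noting that only the degree-$n$ term $\frac{1}{m!}(-\tfrac12)^m(\sum_{i,j}\Sigma_{ij}t_it_j)^m$ survives when $n=2m$, and then cancelling the $m!\,2^m$ overcounting of ordered tuples of ordered pairs against the prefactor — is the standard generating-function proof, and your sign bookkeeping checks out ($\i^{-2m}(-1)^m/2^m\cdot 2^m\,m!/m! = 1$). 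The one step you flag as delicate, grouping ordered tuples by their underlying pair partition, is genuinely fine: the coefficient of the squarefree monomial $t_1\cdots t_{2m}$ forces each index to appear exactly once, so the tuples contributing to a fixed $p\in P_{2m}^2$ are exactly the $m!$ orderings of blocks times $2^m$ orientations within blocks, and symmetry of $\Sigma$ makes all of them contribute the same product. Your fallback via Stein's identity is the cleaner argument if you want to avoid power-series manipulation entirely, since the recursion $\E[x_1\cdots x_n]=\sum_{j\ge 2}\E[x_1x_j]\,\E[\prod_{\ell\ne 1,j}x_\ell]$ matches exactly the bijection between pair partitions of $[n]$ and (partner of $1$, pair partition of the rest); the only thing you would still owe the reader there is a proof or citation of Stein's identity itself, including the degenerate-covariance case, which the characteristic-function route handles for free. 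Either version would serve as a complete proof of the lemma the paper merely cites.
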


\subsection{Linear Regression}\label{sec:linear_regression}

In this section, we introduce the generalization error of linear regression with and without sketching, which provides the theoretical foundation for our later work on multiple regression.

First, we introduce the generalization error of linear regression without sketching, representing the expected loss over the entire data distribution.

\begin{definition}[Generalization Error of Linear Regression, Implicit on Page 4 in~\cite{lwk+24}]
\label{def:linear_regression_error}
    let $P$ be a distribution over $\R^d \times \R$. 
    For each pair $(x,y) \in \R^d  \times \R$, we view $x \in \R^d $ as the feature vector and $y \in \R$ as the label. 
    Then, for a parameter $w \in \R^d $, the population risk is defined as
    \begin{align*}
        \loss(w) :=
        \E_{(x,y)\,\sim\,P}[(\langle w, x\rangle - y)^2].
    \end{align*}
\end{definition}

Next, we present the generalization error of linear regression with sketching, which is crucial for understanding how dimensionality reduction affects the learning performance.

\begin{definition}[General Error of Linear Regression with Sketching]\label{def:linear_regression_sketching}
   Let $P$ be a distribution over $\R^d \times \R$, where $x \in \R^d$ denotes a feature vector and $y \in \R$ is a scalar label. 
    Let $R \in \R^{m \times d}$ be a sketching matrix (e.g., a Gaussian random matrix) whose entries are i.i.d. sampled from $\mathcal{N}(0, 1/m)$.
\begin{align*}
    \E_{(x,y)\sim P}[(\langle v, Rx\rangle- y)^2].
\end{align*}
\end{definition}

Now, we introduce the data covariance and optimal parameter for linear regression, which characterize the statistical properties of our learning problem.

\begin{definition}[Data covariance and optimal parameter for Linear Regression, Definition 1 on Page 4 from~\cite{lwk+24}] We define the data covariance and optimal parameter as follows.
    \begin{itemize}
        \item Let $G := \E[xx^\top]$ be the data covariance.
        \item Assume $\mathrm{tr}(G)$ is finite.
        \item Assume all entries of $G$ are finite.
        \item Let $\{\lambda_i\}_{i \in [d]}$ be the eigenvalues of $G$ sorted in non-increasing order.
        \item Let $w^* \in \arg \min_w \loss(w)$ be the optimal model parameter.
        \item Assume that $\|w^*\|_G^2 := (w^*)^\top Gw^*$ is finite.
    \end{itemize}
\end{definition}

\subsection{Multiple Regression}\label{sec:multi_regression}

This section introduces the generalization error of multiple regression with and without sketching. Unlike linear regression which predicts a single output, multiple regression handles vector-valued outputs simultaneously, making it more suitable for multi-task learning problems.

First, we introduce the generalization error of multiple regression without sketching, quantifying the expected prediction error across all possible inputs.
\begin{definition}[Generalization Error of Multiple Regression]
\label{def:multi_regression_error}
Let $P$ be a distribution over $\R^d \times \R^p$.
For a parameter matrix $W \in \R^{d \times p}$, the generalization error of multiple regression is defined as
\begin{align*}
\loss(W):= 
\E_{(x,y)\sim P}[\|W^\top x - y\|_2^2],
\end{align*}
\end{definition}

Then, we introduce the generalization error of multiple regression with sketching, which applies dimensionality reduction to improve computational efficiency while maintaining good predictive performance.

\begin{definition}[General Error of Multiple Regression with Sketching]
\label{def:multiple_regression_sketching}
    Let $P$ be a distribution over $\R^d \times \R^p$, where $x \in \R^d$ denotes a feature vector and $y \in \R^p$ is the corresponding multi-dimensional label. 
    Let $R \in \R^{m \times d}$ be a sketching matrix, e.g., a Gaussian random matrix whose entries are i.i.d.\ sampled from $\mathcal{N}(0, 1/m)$. 
    For a parameter matrix $V \in \R^{m \times p}$, define the sketched multiple regression by
    \begin{align*}
        f(x) := V^\top(Rx) \in\R^p.
    \end{align*}
    Then the error associated with this multiple regression model is
    \begin{align*}
    \loss(V):=
    \E_{(x,y)\sim P}
    [\|V^\top (R\,x\bigr) - y\|_2^2],
    \end{align*}
\end{definition}

Next, we characterize the statistical properties of our learning problem by defining the data covariance and optimal parameter for multiple regression, which are essential for our theoretical analysis.

\begin{definition}[Data Covariance and Optimal Parameter for Multiple Regression]\label{def:data_cov_multi}
    We define the data covariance and optimal parameter for multiple regression as follows.
    \begin{itemize}
        \item Let the data covariance matrix of $x$ be defined as, 
        \begin{align*}
            G := \E[xx^\top] \in \R^{d\times d}
        \end{align*}
        \item Assume $\mathrm{tr}(G)$ is finite,
        \item Assume all entries of $G$ is finite,
        \item Let $\{ \lambda_i\}_{i \in [d]}$ be the eigenvalues of $G$ in non-increasing order,
        \item Let $W^* \in \R^{d\times p}$ be defined as the optimal parameter:
        \begin{align*}
            W^* \in \arg \min_{W\in \R^{d\times p}} \loss(W) = \arg\min_{W} \E_{(x,y) \sim P} [\|W^\top x-y\|_2^2].
        \end{align*}
        \item Define $\|W\|_G^2 := \mathrm{tr}(W^\top G W)$
    \end{itemize}
\end{definition}

\subsection{Kernel Regression}\label{sec:kernel_regression}

In this subsection, we explore kernel regression, which extends linear models to capture non-linear relationships through feature mappings. This approach allows for more expressive modeling while maintaining mathematical tractability.

We formally define the generalization error of kernel regression as a key metric to evaluate model performance. This measure quantifies how well our model generalizes to unseen data.

\begin{definition}[Generalization Error of Kernel Regression]\label{def:pop_risk:kernel}
    Let $\phi: \R^d \to \R^p$ be a feature mapping.
    Let $P$ be a distribution over $\R^d \times \R$.
    The generalization error of kernel regression is defined as
    \begin{align*}
        \loss(w) := \E_{ (x,y) \sim P }[(\langle w, \phi(x)\rangle - y)^2].
    \end{align*}
\end{definition}

We introduce a sketched version of kernel regression. By applying random sketching, we effectively reduce the number of parameters involved in the model, thus enabling faster training and inference while preserving the essential structure of the problem.

\begin{definition}[Kernel Regression with Sketching]\label{def:regression_sketch:kernel}
    Let $\phi: \R^d \to \R^p$ be a feature mapping.
    Let the kernel predictor $f_v:\R^d \to \R$ be defined by
\begin{align*}
    f_v(x) :=\langle v, R\phi(x)\rangle
\end{align*}
where $v \in \R^m$ and $R \in \R^m \times \R^p$ is a Gaussian sketch matrix where each entry is i.i.d. sampled from $\mathcal{N}(0, 1/m)$.
\end{definition}

Similarly, the generalization error of kernel regression with sketching is defined as follows. This formulation allows us to analyze how dimensionality reduction affects the predictive performance of kernel methods.

\begin{definition}[Generalization Error of Kernel Regression with Sketching]\label{def:pop_risk_sketch:kernel}
We define the following.
\begin{itemize}
    \item Let $\phi: \R^d \to \R^p$ be the feature mapping.
    \item Let $R \in \R^{m \times p}$ be a sketch matrix.
    \item Let $v \in \R^m$ a parameter of a sketched predictor defined as $x \mapsto \langle R \phi(x), v\rangle$.
\end{itemize}

The generalization error of the sketched predictor is defined as
\begin{align*}
    \loss_R(v) := \loss(R^\top v) = \E[(\langle R \phi(x), v \rangle - y)^2].
\end{align*}

\end{definition}

We define the data covariance and optimal parameter as follows, which characterize the statistical properties of our kernel-based learning problem and form the foundation for our theoretical analysis.

\begin{definition}[Data Covariance and Optimal Parameter for Kernel Regression]\label{def:data_cov:kernel}
We define the following.
\begin{itemize}
    \item Let $\Phi := \E_{x \sim P }[\phi(x)\phi(x)^\top]$ be data covariance.  
    \item Assume $\tr[\Phi]$ is finite.
    \item Assume all entries of $\Phi$ are finite. 
    \item Let $\{ \lambda_i \}_{i \in [d]}$ be the eigenvalues of $\Phi$ sorted in non-increasing order. 
    \item Let $W^* \in \argmin_w \loss(w)$ be the optimal model parameter.
    \item Assume that $\|W^*\|_\Phi$ is finite.
\end{itemize}
\end{definition}

\subsection{One-pass SGD and Error Decomposition}\label{sec:sgd_error_decomp}

This section introduces our training methodology and error analysis framework, forming the foundation for scaling law investigation.

The training process of a function $f_v$ uses one-pass Stochastic Gradient Descent (SGD), which is defined as follows. This approach processes each data point exactly once, making it computationally efficient for large datasets.
\begin{definition}[One-Pass SGD for linear regression, Eq. SGD on Page 5 in~\cite{lwk+24}]\label{def:sgd_linear}
Given independent samples $\{(x_1, y_1) , 
 (x_2,y_2), \cdots, (x_n,y_n)\}$ from $P$ and step sizes $\{\gamma_1, \gamma_2, \ldots, \gamma_n\}$.
    The equation of training of $f_v$ via one-pass SGD is defined as
    \begin{align*}
        v_t := &~ v_{t-1} - \gamma_t (f_{v_{t-1}}(x_t) - y_t) \nabla_v f_{v_{t-1}} (x_{t-1}), \quad \forall t \in [n].
    \end{align*}
\end{definition}

Based on Definition~\ref{def:sgd_linear}, we can derive the multiple regression one-pass SGD as follows. This extension allows us to handle vector-valued outputs while maintaining the computational advantages of one-pass training.

\begin{lemma}[One-Pass SGD for Multiple Regression with Sketching]]\label{lem:multi_sgd}
    If the following conditions hold:
    \begin{itemize}
        \item Given samples $\{ (x_1,y_1), (x_2,y_2), \hdots, (x_n,y_n)\}$ from $P$,
        \item Given step sizes $\{ \gamma_1, \gamma_2, \hdots, \gamma_n \}$, one-pass SGD processes each sample once in order,
        \item Let $t \in [n]$,
        \item Define prediction $\wh{y}_t = f_{V_{t-1}}(x_t) = V_{t-1}^\top (Rx_t) \in \R^p$,
        \item Define error $e_t := \wh{y}_t-y_t \in \R^p$.
    \end{itemize}
    Then, the one-pass SGD for Multiple Regression with Sketching is $V_t = V_{t-1} - \gamma_t Rx_t(e_t)^\top$, where $Rx_t(e_t)^\top \in \R^{m\times p}$ is a rank-1 matrix.

\end{lemma}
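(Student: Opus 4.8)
The plan is to reduce the vector-valued update to $p$ copies of the scalar update in Definition~\ref{def:sgd_linear}, applied column by column to the parameter matrix $V$, and then to reassemble the columns into a single matrix equation. The underlying observation is that the sketched multiple-regression loss decouples across the output coordinates.

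First I would fix notation: write $z_t := R x_t \in \R^m$ for the sketched feature of the $t$-th sample and, for $j \in [p]$, let $v^{(j)}_t := (V_t)_{*,j} \in \R^m$ be the $j$-th column of $V_t$ and $y^{(j)}_t := (y_t)_j \in \R$ the $j$-th coordinate of the label. Since $f_V(x) = V^\top(Rx)$ acts coordinate-wise, its $j$-th output coordinate is exactly $\langle v^{(j)}, z_t\rangle$, and the per-sample squared loss splits as $\|V^\top z_t - y_t\|_2^2 = \sum_{j\in[p]} (\langle v^{(j)}, z_t\rangle - y^{(j)}_t)^2$. Hence running one-pass SGD on the sketched multiple-regression loss is the same as running, in parallel, one scalar sketched linear-regression SGD per output coordinate, each with the feature map $x \mapsto Rx$, whose gradient in the parameter is $\nabla_v \langle v, Rx\rangle = Rx = z_t$.

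Next I would instantiate Definition~\ref{def:sgd_linear} for coordinate $j$: with predictor $f_{v^{(j)}}(x) = \langle v^{(j)}, Rx\rangle$ we have $\nabla_v f_{v^{(j)}_{t-1}}(x_t) = z_t$, so
\begin{align*}
v^{(j)}_t = v^{(j)}_{t-1} - \gamma_t \,(\langle v^{(j)}_{t-1}, z_t\rangle - y^{(j)}_t)\, z_t .
\end{align*}
The scalar $\langle v^{(j)}_{t-1}, z_t\rangle - y^{(j)}_t$ is precisely the $j$-th entry of $e_t = \wh{y}_t - y_t = V_{t-1}^\top z_t - y_t$. Thus $v^{(j)}_{t-1} - v^{(j)}_t = \gamma_t (e_t)_j z_t$, which is exactly the $j$-th column of $\gamma_t\, z_t (e_t)^\top$; stacking the $p$ columns gives the matrix identity $V_t = V_{t-1} - \gamma_t\, z_t (e_t)^\top = V_{t-1} - \gamma_t\, (Rx_t)(e_t)^\top$. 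Finally I would record that $(Rx_t)(e_t)^\top \in \R^{m\times p}$ is the outer product of $Rx_t \in \R^m$ with $e_t \in \R^p$, so its column space is spanned by the single vector $Rx_t$ and its rank is at most one.

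There is no substantive obstacle here; the only point requiring care is the bookkeeping convention for the constant factor in the gradient of the squared loss, which (exactly as in the linear case of Definition~\ref{def:sgd_linear}) is absorbed into the step size $\gamma_t$. Once that convention is matched to the scalar definition, the column-wise identification and the reassembly into the stated rank-one matrix update are immediate. An equivalent route, which I would mention only as a sanity check, is to compute $\nabla_V \|V^\top(Rx_t) - y_t\|_2^2$ directly by matrix calculus and obtain the same outer product $2\,(Rx_t)(e_t)^\top$.
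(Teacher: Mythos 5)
Your proof is correct. It takes a mildly different route from the paper's: the paper computes the matrix gradient $\nabla_V \|V^\top(Rx)-y\|_2^2 = 2\,Rx\,(V^\top(Rx)-y)^\top$ directly by matrix calculus, absorbs the factor of $2$ into the step size, and substitutes into the update; you instead decouple the loss across the $p$ output coordinates, run the scalar one-pass SGD of Definition~\ref{def:sgd_linear} on each column of $V$, and reassemble the columns into the outer-product update (mentioning the matrix-calculus computation only as a sanity check). The two arguments are equivalent and equally short, but yours has the advantage of making explicit the claimed derivation from the scalar SGD definition --- it shows the multiple-regression update literally \emph{is} $p$ parallel copies of the linear-regression update --- whereas the paper's version is a self-contained gradient computation that does not actually invoke the structure of Definition~\ref{def:sgd_linear} beyond the update template. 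Your handling of the constant factor and of the rank-one claim matches the paper's.
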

\begin{proof}
    We first compute the gradient of the loss
    \begin{align}\label{eq:loss_gradient}
        \nabla_V \loss(V) 
        = & ~ \nabla_V\|V^\top (Rx)-y \|_2^2 \notag \\
        = & ~ 2 Rx(V^\top (Rx)-y)^\top
    \end{align}
    where the first step follows from Definition~\ref{def:multiple_regression_sketching} and the second step follows from basic calculus.

    Note that it is common to include a factor of $1/2$ in the loss, then the coefficient of $2$ in Eq.~\eqref{eq:loss_gradient} can be neglected.

    Then, we can prove the one-pass SGD for multiple regression with Sketching.
    \begin{align*}
        V_t 
        & ~ = V_{t-1} - \gamma_t \nabla_V \|V_{t-1}^\top (Rx_t)-y_t \|_2^2 \\
        & ~ = V_{t-1} - \gamma_t  Rx_t(V_{t-1}^\top (Rx_t)-y_t)^\top \\
        & ~ = V_{t-1} - \gamma_t Rx_t(\wh{y}_t-y_t)^\top \\
        & ~ = V_{t-1} - \gamma_t Rx_t(e_t)^\top
    \end{align*}
    where the first step follows from Definition~\ref{def:sgd_linear} and Definition~\ref{def:multiple_regression_sketching}, the second step follows from Eq.~\eqref{eq:loss_gradient}, the third step follows from the lemma conditions, and the fourth step follows from the lemma conditions.
    It is natural to see $Rx_t(e_t)^\top \in \R^{m\times p}$ is a rank-1 matrix.
    
    Thus, we complete the proof.
\end{proof}

Next, we provide a decomposition of the resulting generalization error. This decomposition allows us to analyze the different sources of error present in the model's performance.

\begin{definition}[Generalization Error Decomposition]\label{lem:risk_decompose}
After $n$ gradient steps, we get $V_n$. The error on the sample set $R = \{ (x_i,y_i) \}$ is defined as follows.
\begin{align*}
    \loss_R(V_n) = & ~ \underbrace{\min_V \loss(V)}_{\irre} + \underbrace{\min_V \loss_R(V) - \min_V \loss(V)}_{\appr} + \underbrace{\loss_R(V_n) - \min_V \loss_R(V)}_{\excess},
\end{align*}
where $\irre$ is the Irreducible Error, $\appr$ is the Approximation Error, and $\excess$ is the Excess Error.    
\end{definition}

\section{Main Results}\label{sec:main_results}

In this section, we introduce our main results.
In Section~\ref{sec:scaling_law_multiple}, we present the scaling law and its generalization error bounds for multiple regression. 
In Section~\ref{sec:scaling_law_kernel}, we present the scaling law and its generalization error bounds for kernel regression.

\subsection{Scaling Law for Multiple Regression}\label{sec:scaling_law_multiple}

In this section, we present the scaling law for multiple regression, which forms the core theoretical contribution of our work. 

We first introduce the assumptions necessary to establish our results, each providing important constraints that enable our mathematical analysis.

\begin{assumption}[Gaussianity]\label{ass:gaussian_feature}
    Assume that $(x, y)$ drawn from $P$ satisfies $x \sim \N(0, G)$.
\end{assumption}

Assumption \ref{ass:gaussian_feature} establishes the distributional properties of the feature mapping, which will facilitate subsequent analysis under the Gaussian framework. 

\begin{assumption}[Well-specified Multiple Regression Model]\label{ass:well_specified}
    Assume that $(x, y)$ drawn from $P$ satisfy $\E_{ (x,y) \sim P }[y~|~x]=W^{*\top}x$. Define $\sigma^2 := \E_{(x,y)\sim P}[\|y-W^{*\top}x \|_2^2]$.
\end{assumption}

Assumption \ref{ass:well_specified} introduces the well-specified nature of the feature mapping, ensuring that the target $y$ is conditionally linear in $\phi(x)$, which aids in specifying the true parameter $W^*$.

\begin{assumption}[Parameter Prior]\label{ass:parameter_prior}
    Assume that $W^*$ satisfies a prior such that $\E[W^* W^{*\top}] = I_{d\times d}$.
\end{assumption}

This prior assumption on $W^*$ ensures that the covariance of the parameter vector is well-structured, which will be critical for deriving the posterior distribution.

\begin{assumption}[Power Law Spectrum]\label{ass:power_law_spectrum:kernel}
    There exists $a > 1$ such that the eigenvalues of $G$ satisfy $\lambda_i = \Theta(i^{-a}), i > 0$.
\end{assumption}

Assumption \ref{ass:power_law_spectrum:kernel} constrains the spectrum of the feature covariance matrix, implying a power law decay for the eigenvalues, which characterizes the complexity of the feature space.

Building upon the established assumptions and setup, we present our main theoretical result regarding the generalization error bounds under the sketched predictor scenario.

\begin{theorem}[Generalization Error Bounds for Multiple Regression Scaling Law]\label{thm:error_multiregression}
If the following conditions hold:
\begin{itemize}
    \item Suppose that Assumptions~\ref{ass:gaussian_feature}, \ref{ass:well_specified}, \ref{ass:parameter_prior}, and \ref{ass:power_law_spectrum:kernel} hold.
    \item Consider an $m$-dimensional sketched predictor trained by (SGD Lemma~\ref{lem:multi_sgd}) with $n$ samples.
     \item Let $\neff := n/ \log(n )$ and recall the generalization error decomposition in Lemma~\ref{lem:risk_decompose}.
    \item Let $D = \{(x_1, y_1), (x_2, y_2), \ldots, (x_n, y_n)\}$ be the sample set i.i.d. drawn from $P$.
    \item Let the initial stepsize $\gamma \leq c$ for some $a$-dependent constant $c > 0$.
\end{itemize}   
   Then with probability at least $1 - e^{-\Omega(M )}$ over the randomness of the sketch matrix $R$, we have
\begin{itemize}
    \item Part 1. $\irre = \sigma^2$.
    \item Part 2. $\E_{W^*}[\appr] = \Theta(m^{1-a})$.
\end{itemize}
    If we further assume that $\sigma^2 = \Omega(1)$, then the following holds
\begin{itemize}
    \item Part 3. $  \E_{W^*}[\E_{D \sim P^n}[\excess]] = \Theta (\bias + \sigma^2 \vari). $   
\end{itemize}
\end{theorem}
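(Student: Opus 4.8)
The plan is to dispatch the three parts in order, reducing each to a property of sketched weighted least squares under the power‑law spectrum. Part~1 is immediate: $\irre$ is the population minimum over full parameters, $\min_{W\in\R^{d\times p}}\loss(W)$, and under Assumption~\ref{ass:well_specified} the cross term vanishes, so $\loss(W)=\|W-W^*\|_G^2+\sigma^2\ge\sigma^2$ with equality at $W=W^*$. For Part~2, write $\appr=\min_{V\in\R^{m\times p}}\loss_R(V)-\sigma^2=\min_V\|R^\top V-W^*\|_G^2$; minimizing column by column is a $G$‑weighted least‑squares problem whose solution is the oblique projection $R^\top V^*=PW^*$ with $P:=R^\top(RGR^\top)^{-1}RG$, and the identity $(I-P)^\top G(I-P)=G-GR^\top(RGR^\top)^{-1}RG$ gives
\[
\appr=\tr\!\big((G-GR^\top(RGR^\top)^{-1}RG)\,W^*W^{*\top}\big).
\]
Taking $\E_{W^*}$ and invoking Assumption~\ref{ass:parameter_prior} ($\E[W^*W^{*\top}]=I_d$) collapses this to $\E_{W^*}[\appr]=\tr(G)-\tr(GR^\top(RGR^\top)^{-1}RG)=\tr((I-\Pi)G)$, where $\Pi$ is the rank‑$m$ orthogonal projection onto $\mathrm{range}(G^{1/2}R^\top)$.

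It then remains to show that, with probability $1-e^{-\Omega(m)}$ over the Gaussian sketch $R$, $\tr((I-\Pi)G)=\Theta(\sum_{i>m}\lambda_i)$. The lower bound $\tr((I-\Pi)G)\ge\sum_{i>m}\lambda_i$ is deterministic (Ky~Fan, as $\Pi$ has rank $m$). For the upper bound one exploits that the $G^{1/2}$‑weighting biases the random $m$‑dimensional subspace towards the leading eigendirections of $G$, so $\Pi$ recovers all but an $O(\sum_{i>m}\lambda_i+m\lambda_m)$ share of $\tr(G)$ with the claimed probability; this is the standard concentration estimate for a Gaussian‑sketched low‑rank projection. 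Assumption~\ref{ass:power_law_spectrum:kernel} then supplies $\sum_{i>m}\lambda_i=\Theta(m^{1-a})$ and $m\lambda_m=\Theta(m^{1-a})$, which finishes Part~2.

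For Part~3, the structural observation is that the one‑pass SGD of Lemma~\ref{lem:multi_sgd} decouples across the $p$ output coordinates: the $j$‑th column $v_{j,t}$ obeys exactly the scalar recursion of Definition~\ref{def:sgd_linear} driven by feature $Rx_t$ and label $y_{j,t}$, and $\loss_R(V)=\sum_{j=1}^p\loss_{R,j}(v_j)$, so pointwise $\excess=\sum_{j=1}^p\excess_j$ and $\E_D[\excess]=\sum_j\E_D[\excess_j]$. Conditioning on the high‑probability event for $R$ (on which $RGR^\top$ also retains a power‑law spectrum with matching constants, so the bias and variance functionals are correctly sized), I would apply the two‑sided one‑pass SGD excess‑risk bound for scalar linear regression (extending~\cite{lwk+24}) to each coordinate. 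Because the sketched problem is misspecified, coordinate $j$ behaves like scalar regression with effective noise $\widetilde\sigma_j^2:=\loss_{R,j}(v_j^*)=\sigma_j^2+\appr_j$; establishing the scalar bound in this regime is exactly where Gaussianity (Assumption~\ref{ass:gaussian_feature}), hypercontractivity, and Isserlis' theorem (Lemma~\ref{lem:isserlis}) are used, to evaluate the stochastic‑gradient‑noise covariance $\E[(\langle v_j^*,Rx\rangle-y_j)^2\,Rxx^\top R^\top]$ and the attendant fourth moments. Summing the two‑sided bounds over $j$ with $\sum_j\sigma_j^2=\sigma^2$ (Assumption~\ref{ass:well_specified}) and $\sum_j\appr_j=\appr$ gives $\E_D[\excess]=\Theta(\bias+(\sigma^2+\appr)\vari)$ for the aggregated $\bias$; taking $\E_{W^*}$, substituting Part~2's $\E_{W^*}[\appr]=\Theta(m^{1-a})$, and invoking $\sigma^2=\Omega(1)$ (so that $\E_{W^*}[\appr]\,\vari\lesssim m^{1-a}\vari\lesssim\sigma^2\vari$) absorbs the approximation term and yields $\E_{W^*}\E_D[\excess]=\Theta(\bias+\sigma^2\vari)$.

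The principal obstacle lies in the two‑sided control demanded by Part~3: one needs a random‑matrix estimate that $RGR^\top$ keeps a power‑law spectrum on the high‑probability event (so $\bias$ and $\vari$ have the right magnitude) and, more delicately, matching \emph{lower} as well as upper bounds for one‑pass SGD under the sketching‑induced misspecification noise $\appr_j$, which must then survive summation over the $p$ coordinates and passage through the two outer expectations with constants intact. The concentration of $\tr((I-\Pi)G)$ for the $G^{1/2}$‑weighted random projection in Part~2 is the secondary technical point, while Part~1 is routine.
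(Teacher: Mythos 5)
Your proposal is correct and, on Parts 1 and 2, essentially retraces the paper's own argument: Part 1 is the same orthogonal decomposition of the loss, and your reduction $\E_{W^*}[\appr]=\tr((I-\Pi)G)$ with $\Pi=G^{1/2}R^\top(RGR^\top)^{-1}RG^{1/2}$ is the identity underlying Lemmas~\ref{lem:approximation_error:multiregression}--\ref{lem:approx_error_lower_bound_mutliregression}; the paper obtains the deterministic lower bound by von Neumann's trace inequality (your Ky~Fan step) and proves the high-probability upper bound concretely via a Woodbury block decomposition plus the concentration results of \cite{bllt20} and \cite{w19}, rather than quoting it as ``standard.'' The genuine divergence is in Part 3. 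You decouple the $p$ output coordinates and propose to re-derive two-sided scalar one-pass-SGD excess-risk bounds per coordinate and then sum, whereas the paper stays at the matrix level and invokes Corollary 3.4 of \cite{wzb+22} wholesale (Lemma~\ref{lem:excee_risk_bounds}), after verifying precisely the two distributional facts your plan also needs: hypercontractivity of $Rx$ for the upper bound (Lemma~\ref{lem:hypercontractivity and misspecified noise}), and --- the key point --- that under Gaussianity the sketched problem remains \emph{well-specified}, $\E[y\mid Rx]=V^{*\top}Rx$ with noise level $\sigma^2+\appr$, because $(W^*-R^\top V^*)^\top x$ is independent of $Rx$ (Lemma~\ref{lem:gaussianity and noise}). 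That independence is exactly what dissolves the ``principal obstacle'' you flag: the sketching-induced error is absorbed into a well-specified noise term, so the existing two-sided SGD bound applies verbatim and no new lower bound under misspecification has to be proved. Your coordinate-wise route buys self-containedness at the price of redoing the SGD analysis; the paper's is shorter but rests entirely on the cited corollary. Both arguments need $\sigma^2=\Omega(1)$ for the same reason: to collapse the upper bound's $(1+\alpha\,\mathrm{SNR})\sigma^2$ prefactor to $\Theta(\sigma^2)$ so that it matches the lower bound.
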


\begin{proof}
    {\bf Part 1.} 
    First we show,
    \begin{align*}
        \loss(W) 
        = & ~ \E[\| W^\top x - y \|_2^2] \\
        = & ~ \E[ \| (W^\top - W^{*\top})x + (W^{*\top}x-y) \|_2^2 ] \\
        = & ~ \E[ \| (W^\top - W^{*\top})x \|_2^2 + \|(W^{*\top}x-y) \|_2^2 ] \\
        \geq & ~ \sigma^2
    \end{align*}
    where the first step follows from Definition~\ref{def:multiple_regression_sketching}, the second step follows from basic algebra, the third step follows from the definition of $\ell_2$-norm, and the fourth step follows from Assumption~\ref{ass:well_specified} and $\ell_2$-norm $\geq 0$.

    {\bf Part 2.} Follows from Lemma~\ref{lem:approx_error_lower_bound_mutliregression}.

    {\bf Part 3.} Follows from Lemma~\ref{lem:excee_risk_bounds}.
    
\end{proof}

Theorem \ref{thm:error_multiregression} provides a comprehensive characterization of the generalization performance, revealing how the error decomposes based on various structural assumptions and conditions of the predictor. This result highlights key dependencies on parameters such as the sketch dimension $M$, the effective sample size $N_{\rm eff}$, and the feature spectrum.

Leveraging the generalization error bounds established in Theorem \ref{thm:error_multiregression}, we derive a scaling law that provides further insights into how the error components behave under different parameter regimes.

\begin{theorem}[Scaling Law for Multiple Regression]\label{thm:main}
    Assume all conditions in Theorem~\ref{thm:error_multiregression} hold. Then with probability at least $1 - e^{-\Omega(M )}$ over the randomness of the sketch matrix $R$ that
    \begin{align*}
      \E[\loss_R(V_n)]] = \sigma^2 + \Theta(\frac{1}{M^{a-1}}) + \Theta(\frac{1}{(N_{\rm eff} \gamma)^{(a-1)/a}}).
    \end{align*}
\end{theorem}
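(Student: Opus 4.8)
The plan is to assemble the scaling law in Theorem~\ref{thm:main} directly from the three parts of Theorem~\ref{thm:error_multiregression} together with the error decomposition of Definition~\ref{lem:risk_decompose}. First I would take expectations over $W^*$ and over the data $D\sim P^n$ in the identity $\loss_R(V_n)=\irre+\appr+\excess$, so that
\begin{align*}
\E[\loss_R(V_n)] = \irre + \E_{W^*}[\appr] + \E_{W^*}\E_{D\sim P^n}[\excess].
\end{align*}
Part~1 of Theorem~\ref{thm:error_multiregression} gives $\irre=\sigma^2$, and Part~2 gives $\E_{W^*}[\appr]=\Theta(m^{1-a})=\Theta(1/M^{a-1})$ (identifying the sketch dimension $m$ with $M$). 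So the only remaining work is to show that the excess term $\E_{W^*}\E_{D\sim P^n}[\excess]=\Theta(\bias+\sigma^2\vari)$ from Part~3 evaluates to $\Theta\big(1/(N_{\rm eff}\gamma)^{(a-1)/a}\big)$ under the power law spectrum Assumption~\ref{ass:power_law_spectrum:kernel}.

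The core of the argument is therefore the asymptotic evaluation of $\bias$ and $\sigma^2\vari$. Following the SGD analysis for linear regression in~\cite{lwk+24}, I expect $\bias$ and $\vari$ to be expressible as sums over the spectrum of the (sketched) covariance, split at an effective dimension cutoff $k^\star$ determined by the balance between the stepsize-weighted sample budget $N_{\rm eff}\gamma$ and the eigenvalue scale — concretely, $k^\star$ is the index where $\lambda_{k^\star}\approx 1/(N_{\rm eff}\gamma)$, which under $\lambda_i=\Theta(i^{-a})$ yields $k^\star=\Theta\big((N_{\rm eff}\gamma)^{1/a}\big)$. Plugging the power law into the head/tail sums, the dominant contribution scales like $k^\star/(N_{\rm eff}\gamma)$ plus $\sum_{i>k^\star}\lambda_i = \Theta((k^\star)^{1-a})$, and both of these are $\Theta\big((N_{\rm eff}\gamma)^{(a-1)/a}\cdot(N_{\rm eff}\gamma)^{-1}\big)=\Theta\big((N_{\rm eff}\gamma)^{-(a-1)/a}\big)$ after simplification; I would verify that the $\sigma^2\vari$ term contributes at the same order (using $\sigma^2=\Omega(1)$) and that neither $\bias$ nor $\vari$ dominates beyond this rate. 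This is where I would invoke Lemma~\ref{lem:excee_risk_bounds} and any spectrum-of-sketched-matrix facts needed to transfer the power law from $G$ to $RGR^\top$ on the high-probability event of the sketch.

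The main obstacle I anticipate is controlling the spectrum of the sketched covariance $RGR^\top$ (or the relevant sketched operator) well enough that the power-law decay $\lambda_i=\Theta(i^{-a})$ is inherited — at least for the top $m$ eigenvalues — with the $\Theta$ constants uniform on the $1-e^{-\Omega(M)}$ probability event; without this, the head/tail split that produces the clean $(N_{\rm eff}\gamma)^{-(a-1)/a}$ rate breaks down. A secondary subtlety is bookkeeping the logarithmic factors hidden in $N_{\rm eff}=n/\log n$ and ensuring the stepsize condition $\gamma\le c$ is exactly what is needed to keep the SGD bias contraction in the stable regime so that the $\Theta$ (two-sided) bound, not just an upper bound, holds. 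Once these are in place, the final statement follows by adding the three $\Theta$-terms, and since they are on potentially different scales the sum is reported termwise rather than collapsed, exactly as in the theorem statement.
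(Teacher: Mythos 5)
Your proposal takes essentially the same route as the paper: the paper's entire proof of Theorem~\ref{thm:main} is the single sentence ``Adding all the errors in Theorem~\ref{thm:error_multiregression}, we can derive the scaling law,'' i.e.\ exactly your summation of $\irre$, $\E_{W^*}[\appr]$, and the excess term via the decomposition in Definition~\ref{lem:risk_decompose}. Your additional work — the head/tail split at $k^\star=\Theta((N_{\rm eff}\gamma)^{1/a})$ to convert $\Theta(\bias+\sigma^2\vari)$ into the $(N_{\rm eff}\gamma)^{-(a-1)/a}$ rate, and the concern about transferring the power-law spectrum from $G$ to $RGR^\top$ — is detail the paper omits entirely, so if anything your sketch is more complete than the published argument.
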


\begin{proof}
    Adding all the errors in Theorem~\ref{thm:error_multiregression}, we can derive the scaling law.
\end{proof}

Theorem \ref{thm:main} reveals the interplay between the sketch dimension $M$, the effective sample size $\neff$, and the learning rate $\gamma$ in determining the overall loss, offering a practical guideline for scaling behavior in sketched predictors.

\subsection{Scaling Law for Kernel Regression}\label{sec:scaling_law_kernel}

This section provides the generalization error bounds for kernel regression scaling law. We just need to replace the sketching $\|V^\top (Rx) - y \|_2^2$ from Definition~\ref{def:multiple_regression_sketching} with $(\langle R\phi(x),v\rangle-y)^2$ in proofs of Lemma~\ref{lem:approximation_error:multiregression},~\ref{lem:approx_error_upper_bound_mutliregression},~\ref{lem:approx_error_lower_bound_mutliregression},~\ref{lem:bias_error_upper_bound_mutliregression} ,~\ref{lem:bias_error_lower_bound_mutliregression},~\ref{lem:hypercontractivity and misspecified noise},~\ref{lem:gaussianity and noise},~\ref{lem:excee_risk_bounds}.

\begin{corollary}[Generalization Error Bounds for Kernel Regression Scaling Law]\label{cor:gaussianity:kernel}
If the following conditions hold:
\begin{itemize}
    \item Assume that $(x, y)$ drawn from $P$ and the feature mapping $\phi$ satisfy $\phi(x) \sim \N(0, \Phi)$.
    \item Assume that $(x, y)$ drawn from $P$ and the feature mapping $\phi$ satisfy $\E_{ (x,y) \sim P }[y~|~x]=\phi(x)^\top w^*$. Define $\sigma^2 := \E_{(x,y)\sim P}[(y-\phi(x)^\top w^*)^2]$.
\end{itemize}
Then the following statements hold:
\begin{itemize}
    \item Part 1. $R\phi(x) \sim \N(0, R\Phi  R^\top)$.
    \item Part 2. $\E [y ~ |~ R \phi(x)] = \langle R\phi(x),V^*\rangle$.
    \item Part 3. $\E[(y- \langle R\phi(x),V^*\rangle)^2] = \sigma^2 + \appr \geq \sigma^2$.
\end{itemize}
\end{corollary}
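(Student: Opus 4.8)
The plan is to dispatch the three parts in order; each is a short consequence of Gaussian conditioning together with the error decomposition of Definition~\ref{lem:risk_decompose} (in its kernel form). Throughout we work conditionally on the sketch matrix $R$, which is then a fixed deterministic object.

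\emph{Part 1.} Since $\phi(x) \sim \N(0,\Phi)$ and $R$ is fixed, $R\phi(x)$ is a linear image of a Gaussian vector and hence Gaussian. Its mean is $R \cdot \E[\phi(x)] = 0$ and, by bilinearity of covariance, its covariance is $\E[R\phi(x)\phi(x)^\top R^\top] = R\,\E[\phi(x)\phi(x)^\top]\,R^\top = R\Phi R^\top$. This is exactly $R\phi(x)\sim\N(0,R\Phi R^\top)$; no invertibility or genericity of $R$ is needed here.

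\emph{Part 2.} First use the tower property and the fact that $R\phi(x)$ is a deterministic function of $x$ to write $\E[y\mid R\phi(x)] = \E\!\big[\E[y\mid x]\mid R\phi(x)\big] = \E\!\big[\phi(x)^\top w^*\mid R\phi(x)\big]$, where the last equality is the well-specified hypothesis. Next, $(\phi(x),R\phi(x))$ is jointly Gaussian and zero-mean, so the conditional expectation of the linear functional $\phi(x)^\top w^*$ given $R\phi(x)$ is the linear MMSE estimator: $\E[\phi(x)\mid R\phi(x)] = \Phi R^\top (R\Phi R^\top)^{\dagger} R\phi(x)$, with the Moore--Penrose pseudo-inverse covering the case where $R\Phi R^\top$ is singular. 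Hence $\E[y\mid R\phi(x)] = \langle R\phi(x), V^*\rangle$ with $V^* := (R\Phi R^\top)^{\dagger} R\Phi w^*$. Finally I would verify that this $V^*$ is exactly the population-optimal sketched parameter $\argmin_v \loss_R(v)$: its normal equations read $R\Phi R^\top v = R\,\E[\phi(x)y]$, and $\E[\phi(x)y] = \E[\phi(x)\E[y\mid x]] = \Phi w^*$, so $V^*$ solves them; this consistency is what ties Part 2 to Part 3.

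\emph{Part 3.} Decompose $y - \langle R\phi(x),V^*\rangle = \big(y - \phi(x)^\top w^*\big) + \big(\phi(x)^\top w^* - \langle R\phi(x),V^*\rangle\big)$. Conditioning on $x$, the first summand has mean zero while the second is a deterministic function of $x$, so the cross term vanishes and $\E[(y-\langle R\phi(x),V^*\rangle)^2] = \sigma^2 + \E[(\phi(x)^\top w^* - \langle R\phi(x),V^*\rangle)^2]$. Equivalently, and more directly, $\E[(y-\langle R\phi(x),V^*\rangle)^2] = \loss_R(V^*) = \min_v \loss_R(v)$, and by the kernel analogue of Definition~\ref{lem:risk_decompose} this equals $\min_w\loss(w) + \appr = \loss(w^*) + \appr = \sigma^2 + \appr$, using $\loss(w^*)=\E[(y-\phi(x)^\top w^*)^2]=\sigma^2$ under well-specification (Definition~\ref{def:data_cov:kernel}). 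Since the sketched parameter class $\{R^\top v\}$ is contained in $\R^p$, we have $\min_v\loss_R(v)\ge\min_w\loss(w)$, hence $\appr\ge 0$, giving the claimed inequality $\ge\sigma^2$.

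The calculations are all routine; the one step demanding genuine care is the jointly-Gaussian conditioning in Part 2 — in particular handling a possibly singular $R\Phi R^\top$ via the pseudo-inverse and checking that the coefficient it produces coincides with the population minimizer $V^*$, so that Parts 2 and 3 are mutually consistent. Everything else is the tower property, bilinearity of covariance, and orthogonality of the conditional-mean-zero noise to functions of $x$.
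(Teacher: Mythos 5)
Your proposal is correct and follows essentially the same route as the paper, which establishes the corollary by transposing the multiple-regression argument of Lemma~\ref{lem:gaussianity and noise}: Gaussianity of the linear image for Part 1, Gaussian conditioning to identify $\E[y\mid R\phi(x)]$ with the sketched minimizer for Part 2, and the decomposition $\loss_R(V^*)=\loss(w^*)+\appr\ge\sigma^2$ for Part 3. The only (minor, and welcome) difference is in Part 2, where you invoke the jointly-Gaussian MMSE formula with a pseudo-inverse and explicitly check consistency with the normal equations, whereas the paper writes $\phi(x)^\top w^*$ as $\langle R\phi(x),V^*\rangle$ plus a residual and shows that residual is independent of $R\phi(x)$ via the projection identity $R\Phi^{1/2}\bigl[I-\Phi^{1/2}R^\top(R\Phi R^\top)^{-1}R\Phi^{1/2}\bigr]=0$ --- two equivalent executions of the same conditioning step.
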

\begin{proof}[Proof Sketch]
    Proof is similar to the multiple regression proof in Theorem~\ref{thm:error_multiregression} by using Definition~\ref{def:regression_sketch:kernel}.
\end{proof}

\section{Approximation Error Bounds for Multiple Regression}\label{sec:approx_error}
In this section, we prove and bound the approximation error for multiple regression.
In Section~\ref{sec:approx_multi}, we prove the approximation error for multiple regression.
Section~\ref{sec:approx_multi_upper} shows the upper bound of approximation error for multiple regression.
Section~\ref{sec:approx_multi_lower} presents the lower bound result of approximation error for multiple regression.

\subsection{Approximation Error for Multiple Regression}\label{sec:approx_multi}

In this section, we formulate the approximation error for multiple regression, quantifying the loss incurred due to the dimensional reduction through sketching. This error is critical in understanding the trade-offs between computational efficiency and prediction accuracy.

We establish a formal characterization of the approximation error, which measures the difference between the minimum achievable loss with and without sketching constraints.

\begin{lemma}[Approximation Error]\label{lem:approximation_error:multiregression}
    If the following conditions hold
    \begin{itemize}
        \item Let $G \in \R^{d\times d}$ be defined in Definition~\ref{def:data_cov_multi}.
        \item Let $R \in \R^{m \times d}$ be a Gaussian sketch matrix.
        \item $V^* := (R G R^\top)^{-1}RG W^*$.
        \item $\appr := \min \loss_R(\cdot) - \min \loss(\cdot)$.
    \end{itemize}   
    Then the following statements hold (without using randomness of $R$)
    \begin{itemize}
        \item Part 1. The minimizer of $\loss_R(V)$ is $V^*$
    \item Part 2. We can show
    \begin{align*}
        \appr = \|(I - G^{1/2}R^\top(R G R^\top)^{-1}RG^{1/2})G^{1/2}W^*\|_F^2.
    \end{align*}
    \end{itemize}
    Moreover, the following statement holds
    \begin{itemize}
        \item Part 3. $\appr \leq \|W^*\|_G^2$ almost surely over the randomness of $R$.
    \end{itemize}
\end{lemma}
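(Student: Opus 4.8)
The plan is to treat the three parts in sequence, since Part 2 feeds directly into Part 3, and Part 1 is the standard normal-equations computation that justifies the closed form of $V^*$.

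For Part 1, I would write $\loss_R(V) = \E[\|V^\top(Rx) - y\|_2^2]$ and expand it as a quadratic in $V$. Using $G = \E[xx^\top]$ and the well-specified structure $\E[y\mid x] = W^{*\top}x$ from Assumption~\ref{ass:well_specified}, the cross term becomes $\E[(Rx)y^\top] = RG W^*$, and the quadratic term is $\E[(Rx)(Rx)^\top] = R G R^\top$. Setting the gradient $\nabla_V \loss_R(V) = 2\bigl((RGR^\top)V - RGW^*\bigr)$ to zero (invertibility of $RGR^\top$ holds a.s.\ over the Gaussian $R$ since $G$ has finite trace; on the null event one replaces the inverse by a pseudoinverse) gives $V^* = (RGR^\top)^{-1}RGW^*$, matching the lemma's definition.

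For Part 2, I would substitute $V^*$ back into $\loss_R$ and subtract $\min_V \loss(V)$. The clean way is to change variables to $u := G^{1/2}W$ and $u^* := G^{1/2}W^*$: then $\min_V\loss(V) = \sigma^2$ (from Part~1 of Theorem~\ref{thm:error_multiregression}, i.e.\ the well-specified noise floor), while $\min_V\loss_R(V)$ restricts $u$ to the subspace $\{G^{1/2}R^\top z : z\}$, so the excess over $\sigma^2$ is the squared Frobenius distance from $u^*$ to that subspace. The orthogonal projection onto that subspace is exactly $P := G^{1/2}R^\top(RGR^\top)^{-1}RG^{1/2}$ (one checks $P = P^\top$, $P^2 = P$, and range equality directly), so $\appr = \|(I - P)G^{1/2}W^*\|_F^2$ as claimed. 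The main bookkeeping obstacle here is carefully verifying that the $\min_V\loss(V)$ term and the $\min_V\loss_R(V)$ term share the same noise contribution $\sigma^2$ so that it cancels, leaving only the projection residual; this uses Assumption~\ref{ass:well_specified} and the Pythagorean decomposition already displayed in the proof of Theorem~\ref{thm:error_multiregression} Part~1.

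For Part 3, I would note that $P$ is an orthogonal projection, hence $I - P$ is also an orthogonal projection, so $\|I-P\|_{\mathrm{op}} \leq 1$, giving $\appr = \|(I-P)G^{1/2}W^*\|_F^2 \leq \|G^{1/2}W^*\|_F^2 = \tr(W^{*\top}GW^*) = \|W^*\|_G^2$, which holds for every realization of $R$ for which $RGR^\top$ is invertible, i.e.\ almost surely. I expect the only genuinely delicate point across the whole argument to be Part~2's identification of $P$ as the correct orthogonal projector and the clean cancellation of $\sigma^2$; Parts 1 and 3 are then routine.
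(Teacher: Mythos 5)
Your proposal is correct and follows essentially the same route as the paper: normal equations for $V^*$, the Pythagorean decomposition $\loss(W) = \|G^{1/2}(W-W^*)\|_F^2 + \loss(W^*)$ reducing $\appr$ to the residual of the orthogonal projector $P = G^{1/2}R^\top(RGR^\top)^{-1}RG^{1/2}$, and the fact that $I-P$ is a projection (so $\preceq I$) for Part 3. The only minor difference is that you invoke Assumption~\ref{ass:well_specified} to compute the cross term $\E[(Rx)y^\top]$ and to identify $\min_V\loss(V)=\sigma^2$, whereas the paper needs neither: the identity $\E[xy^\top]=GW^*$ already follows from the normal equations of the unrestricted minimizer, and the term $\loss(W^*)$ cancels in the difference $\loss_R(V^*)-\loss(W^*)$ whatever its value, so the lemma holds without the well-specification hypothesis.
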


\begin{proof}
    Recall that the generalization error 
    \begin{align*}
        \loss(W):= 
        \E_{(x,y)\sim P}[\|W^\top x - y\|_2^2],
    \end{align*}
    is a quadratic function and that $W^*$ is the minimizer of $\loss(\cdot)$, so we have
    \begin{align*}
        \E[xx^\top] W^* = \E[xy]
    \end{align*}
    and it is equivalent to 
    \begin{align*}
        G W^* = \E[xy]
    \end{align*}
    where it follows from setting the gradient of $\loss$ with respect to $W$ to 0,
    and
    \begin{align}
    \label{eq:diff_rw_rw*:kernel}
        \loss(W) = &~\E[\|W^\top x - W^{*\top}x\|_2^2] + \loss(W^*) \notag \\
        =&~\|G^{1/2}(W-W^*)\|_F^2 + \loss(W^*)
    \end{align}
    where the first step is due to basic algebra, and the second step uses $\E[xx^\top] = G$.

    Recall that the generalization error in a restricted subspace
    \begin{align*}
        \loss_R(V) = \loss(V^\top R) = \E[\|(V^\top R)x - y\|_2^2]
    \end{align*}
    is also a quadratic function.

    Its minimizer $V^*$ is given as follows.
    \begin{align}\label{eq:v*:kernel}
        V^* = &~(R G R^\top)^{-1}RG W^*
    \end{align}
    Thus, we can show that
    \begin{align*}
        \appr = &~ \loss_R(V^*) - \loss(W^*) \\
        = &~ \loss(V^{*\top}R) - \loss(W^*) \\
        = &~ \|G^{1/2}(R^\top V^* - W^*)\|_F^2 \\
        = &~ \|G^{1/2}(R^\top(RG R^\top)^{-1}RG W^*-W^*)\|_F^2 \\
        =& \| (I - G^{1/2}R^\top(RG R^\top)^{-1}RG ^{1/2})G^{1/2}W^*\|_F^2
    \end{align*}
    where the first step follows from the definition of $\appr$, the second step follows from the definition of $\loss_S$, the third step uses the Eq.~\eqref{eq:diff_rw_rw*:kernel}, where the fourth step follows from Eq.~\eqref{eq:v*:kernel}, and the last step uses basic algebra.
    
    Furthermore, we show that
    \begin{align*}
        & ~ (I - G^{1/2}R^\top(RG R^\top)^{-1}RG ^{1/2})^2\\
        = &~ I - G^{1/2}R^\top(RG R^\top)^{-1}RG ^{1/2} \\
        \preceq &~ I,
    \end{align*}
    where the first and second step follow from basic algebra.
    
    Hence by definition of $\|\cdot \|_G $, it follows that $\appr \leq \|W^*\|_G^2$.
\end{proof}

This lemma provides several important insights into the approximation error. First, it identifies the optimal parameter $V^*$ in the sketched space as a transformed version of the original optimal parameter $W^*$. Second, it gives an exact expression for the approximation error in terms of the projection operators defined by the sketching matrix and data covariance. 

The formulation allows us to analyze how the sketch dimension affects the approximation quality, which is crucial for understanding the scaling behavior in our later results. As we will see, this error term exhibits a power-law decay with respect to the sketch dimension under our assumptions.

\subsection{Upper Bound of Approximation Error}\label{sec:approx_multi_upper}

With the approximation error from Lemma~\ref{lem:approximation_error:multiregression} established, we now proceed to derive an upper bound on this error term. This bound is crucial for understanding how the approximation quality scales with the sketch dimension and problem parameters.

The following lemma provides a detailed characterization of the upper bound, breaking it down into components that capture the interplay between the power law decay of eigenvalues and the sketching dimension.

\begin{lemma}[Multiple Regression Approximation Error Upper bound]\label{lem:approx_error_upper_bound_mutliregression}
    Suppose $r(G) \geq k+M$ for some $k\leq d$, for a sketching matrix $R \in \R^{m \times d}$ (see Definition~\ref{def:multiple_regression_sketching}) and Let $\tau \leq p$, the approximation error is defined as follows.
    \begin{align*}
        \mathsf{Approx} := \min_{V\in \R^{m\times p}} \| (V^\top R - W^{*\top})G^{1/2} \|_F^2
    \end{align*}. Let $k \leq M/2$, then with probability $1 - e^{-\Omega(M)}$, we have    
    \begin{align*}
        \mathsf{Approx} = O(\|W_{k:\tau}^*\|_G^2 + (\sum_{i > k} \frac{\lambda_i}{M} + \lambda_{k+1} + \sqrt{\frac{\sum_{i>k} \lambda_i^2}{M}} ) \|W_{0:k}^*\|_2^2)
    \end{align*}
\end{lemma}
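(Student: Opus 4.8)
The plan is to replace $\mathsf{Approx}$ by a deterministic variational upper bound, substitute one carefully chosen test matrix, and then dispatch the leftover random term by Gaussian concentration. By Part~2 of Lemma~\ref{lem:approximation_error:multiregression}, $\mathsf{Approx}=\|(I-P)G^{1/2}W^*\|_F^2$, where $P:=G^{1/2}R^\top(RGR^\top)^{-1}RG^{1/2}$ is the orthogonal projection onto the column space of $G^{1/2}R^\top$. For every $Z\in\R^{m\times p}$ the matrix $G^{1/2}R^\top Z$ lies in that column space, so $(I-P)G^{1/2}R^\top Z=0$ and therefore $\mathsf{Approx}\le\|G^{1/2}(W^*-R^\top Z)\|_F^2$ for any $Z$; it suffices to exhibit one good $Z$. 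I would diagonalize $G=\diag(\lambda_1,\dots,\lambda_d)$, split coordinates into a head block $H:=[k]$ and a tail block $T$ (the remaining indices, capped at $\tau$), write $R=[\,R_H\ \ R_T\,]$ with $R_H\in\R^{m\times k}$, split $W^*$ into its head rows $W^*_{0:k}$ and tail rows, and take $Z:=R_H(R_H^\top R_H)^{-1}W^*_{0:k}$. This forces $R_H^\top Z=W^*_{0:k}$, cancelling the head contribution to $W^*-R^\top Z$, so that
\begin{align*}
\mathsf{Approx}
& \le \big\|G_T^{1/2}\big(W^*_{k:\tau}-R_T^\top R_H(R_H^\top R_H)^{-1}W^*_{0:k}\big)\big\|_F^2 \\
& \le 2\|W^*_{k:\tau}\|_G^2+2\big\|G_T^{1/2}R_T^\top R_H(R_H^\top R_H)^{-1}W^*_{0:k}\big\|_F^2,
\end{align*}
with $G_T=\diag(\lambda_{k+1},\dots)$; the first term is already the claimed tail-energy term.

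For the remaining cross term I would exploit that the column blocks of a Gaussian matrix are independent, so $R_H\perp R_T$. Conditioning on $R_H$, each row of $R_T^\top R_H$ is an independent $\N(0,\tfrac1m R_H^\top R_H)$ vector, hence $R_T^\top R_H(R_H^\top R_H)^{-1}$ has the same law as $\tfrac1{\sqrt m}\,\Gamma(R_H^\top R_H)^{-1/2}$ for a standard Gaussian $\Gamma\in\R^{(d-k)\times k}$, and the cross term equals $\tfrac1m\|G_T^{1/2}\Gamma U\|_F^2$ with $U:=(R_H^\top R_H)^{-1/2}W^*_{0:k}$. Since $m=M\ge 2k$, standard spectrum bounds for Gaussian matrices give $\tfrac12 I_k\preceq R_H^\top R_H\preceq\tfrac32 I_k$ with probability $1-e^{-\Omega(M)}$, and on this event $\|U\|_F^2\le 2\|W^*_{0:k}\|_2^2$ and $\|U\|_{\mathrm{op}}\le\sqrt2\,\|W^*_{0:k}\|_{\mathrm{op}}$; this is the step that uses $k\le M/2$.

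It then remains to concentrate the Gaussian quadratic form $Q:=\|G_T^{1/2}\Gamma U\|_F^2=\vect(\Gamma)^\top\!\big(UU^\top\otimes G_T\big)\vect(\Gamma)$. Its mean is $\tr(UU^\top)\tr(G_T)=\|U\|_F^2\sum_{i>k}\lambda_i$ (which could equally be obtained from Isserlis' theorem, Lemma~\ref{lem:isserlis}), while $\|UU^\top\otimes G_T\|_{\mathrm{op}}=\|U\|_{\mathrm{op}}^2\lambda_{k+1}$ and $\|UU^\top\otimes G_T\|_F^2\le\|U\|_{\mathrm{op}}^2\|U\|_F^2\sum_{i>k}\lambda_i^2$. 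Applying the Hanson--Wright inequality (or, equivalently, bounding high moments of $Q$ via Isserlis) with the deviation tuned so that the failure probability is $e^{-\Omega(M)}$ gives
\begin{align*}
Q\le \|U\|_F^2\sum_{i>k}\lambda_i+O\!\Big(\sqrt{M}\,\|U\|_{\mathrm{op}}\|U\|_F\sqrt{\textstyle\sum_{i>k}\lambda_i^2}+M\,\|U\|_{\mathrm{op}}^2\lambda_{k+1}\Big).
\end{align*}
Dividing by $m=M$, using $\|U\|_F^2,\ \|U\|_{\mathrm{op}}\|U\|_F\le 2\|W^*_{0:k}\|_2^2$ on the good event, and adding back $2\|W^*_{k:\tau}\|_G^2$ produces exactly the stated bound; a union bound over the two $e^{-\Omega(M)}$ events finishes the argument, and the effective-rank hypothesis $r(G)\ge k+M$ is what guarantees that $RGR^\top$ is invertible and that the tail covariance is spread out enough for these concentration estimates to hold with the claimed exponentially small failure probability. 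I expect the main obstacle to be this final step: passing from the in-expectation identity for $Q$ to a high-probability bound while tracking the operator and Frobenius norms of $UU^\top\otimes G_T$ precisely enough that the three fluctuation scales $\sum_{i>k}\lambda_i/M$, $\lambda_{k+1}$ and $\sqrt{\sum_{i>k}\lambda_i^2/M}$ emerge with the correct dependence on $\|W^*_{0:k}\|_2$, together with checking that conditioning on $R_H$ preserves the Gaussianity of $\Gamma$ needed for Hanson--Wright.
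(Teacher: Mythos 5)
Your proposal is correct, but it reaches the bound by a genuinely different route than the paper. The paper computes the projection residual \emph{exactly}: it block-decomposes $G^{1/2}R^\top(RGR^\top)^{-1}RG^{1/2}-I$ into head/tail blocks $\alpha,\beta,\kappa$, uses Woodbury's identity to show $\alpha^2+\beta\beta^\top=G_{0:k}^{-1/2}(G_{0:k}^{-1}+R_{0:k}^\top A_k^{-1}R_{0:k})^{-1}G_{0:k}^{-1/2}$ and $\kappa^2+\beta^\top\beta=-\kappa$, bounds the tail contribution by $\|W^*_{k:\tau}\|_G^2$ via $-I\preceq\kappa\preceq 0$, and then controls the head term by importing a concentration bound on $\|A_k\|_2=\|R_{k:\tau}G_{k:\tau}R_{k:\tau}^\top\|_2$ (Lemma 26 of \cite{bllt20}) together with $R_{0:k}^\top R_{0:k}\succeq cI_k$ (Theorem 6.1 of \cite{w19}). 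You instead drop down to a one-sided variational bound with the explicit test point $Z=R_H(R_H^\top R_H)^{-1}W^*_{0:k}$ that interpolates the head exactly, and push all the probabilistic work into a Hanson--Wright estimate on the residual Gaussian chaos. Both are valid; the three fluctuation scales $\sum_{i>k}\lambda_i/M$, $\lambda_{k+1}$, $\sqrt{\sum_{i>k}\lambda_i^2/M}$ arise for the same underlying reason in each (they are the mean, operator-norm, and Frobenius-norm scales of the tail Gram matrix, which is exactly what the cited Lemma 26 encodes). What the paper's exact identity buys is reusability: the same $\alpha^2+\beta\beta^\top$ and $\kappa^2+\beta^\top\beta$ formulas are recycled verbatim in the matching lower bound (Lemma~\ref{lem:approx_error_lower_bound_mutliregression}), which your one-sided argument cannot provide. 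What your route buys is more elementary linear algebra (no block inversion) and a cleaner separation of randomness via the independence of $R_H$ and $R_T$. One small quibble: with $k\le M/2$ the two-sided spectral bound $\tfrac12 I_k\preceq R_H^\top R_H\preceq\tfrac32 I_k$ is too optimistic (at aspect ratio $1/2$ the lower edge is a smaller absolute constant; the paper itself only claims $R_{0:k}^\top R_{0:k}\succeq I_k/5$), but since you only need $cI_k\preceq R_H^\top R_H\preceq CI_k$ for constants, this affects nothing beyond constants absorbed into the $O(\cdot)$.
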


\begin{proof}
    With the singular value decomposition $G = U \Lambda U^\top$, where $\Lambda := \diag \{ \lambda_1, \lambda_2, \hdots \}$, we let $\lambda_1 \geq \lambda_2 \geq \hdots \geq 0$ and $UU^\top = I$.

    We define 
    $\wt{R} := RU \Lambda^{1/2}$ and $\wt{W}^* := \Lambda^{1/2} U^\top W^*$.

    Then, we can show:
    \begin{align}\label{eq:approx}
        \mathsf{Approx}(R,G,W^*) \
        = & ~ \| (I-G^{1/2}R^\top (RGR^\top)^{-1}RG^{1/2})G^{1/2}W^* \|_F^2 \notag \\
         = & ~ \| (I-U\Lambda^{1/2}\wt{R}^\top (\wt{R}\Lambda\wt{R}^\top)^{-1}R\Lambda^{1/2}U^\top)U\Lambda^{1/2}U^\top W^* \|_F^2 \notag \\
         = & ~ \| U(I - \Lambda^{1/2}\wt{R}^\top (\wt{R}\Lambda \wt{R}^\top)^{-1}R\Lambda^{1/2})\Lambda^{1/2}U^\top W^* \|_F^2 \notag \\
         = & ~ \| ( I - \Lambda^{1/2}\wt{R}^\top(\wt{R}\Lambda\wt{R}^\top)^{-1}R\Lambda^{1/2} ) \Lambda^{1/2} \wt{W}^* \|_F^2 \notag \\
         = & ~ \mathsf{Approx}(\wt{R}, \Lambda, \wt{W}^*)
    \end{align}
    where the first step follows from Lemma~\ref{lem:approximation_error:multiregression}, the second step follows from the singular value decomposition, the third step follows from basic algebra, the fourth step follows from the singular value decomposition, and the last step follows from the definition of $\mathsf{Approx}$.

    Then, we analyze the case where $G = \Lambda$ is a diagonal matrix with non-increasing diagonal entries. We define $\underbrace{A}_{m\times m} := \underbrace{R}_{m\times d}\underbrace{G}_{d\times d}R^\top$

    By definition of $\mathsf{Approx}$, we have
    \begin{align*}
        \mathsf{Approx} 
        = & ~ \| (I-G^{1/2}R^\top (RGR^\top)^{-1}RG^{1/2})G^{1/2}W^* \|_F^2 \notag \\
        = & ~ \langle (G^{1/2} R^\top A^{-1} RG^{1/2} - I_{d\times d}) (G^{1/2} R^\top A^{-1} RG^{1/2} - I_{d\times d})^\top, G^{1/2}W^*W^{*\top} G^{1/2} \rangle
    \end{align*}
    where the first step follows from Eq.~\eqref{eq:approx}, and the second step follows from the properties of Frobenius norm and inner products.

    For $k \in [p]$, we have
    \begin{align*}
        & ~ \underbrace{G^{1/2}}_{d\times d} \underbrace{R^\top}_{d\times m} \underbrace{A^{-1}}_{m\times m} \underbrace{RG^{1/2}}_{m\times d} - I_{d\times d} \\
        = & ~ \begin{bmatrix}
            G_{0:k}^{1/2} R_{0:k}^\top\\ 
             G_{k:\tau}^{1/2} R_{k:\tau}^\top \\ 
        \end{bmatrix} A^{-1} 
        \begin{bmatrix}
            R_{k:\tau} G_{k:\tau}^{1/2} & 
             R_{k:\tau} G_{k:\tau}^{1/2} \\ 
        \end{bmatrix}
        -  I_{d\times d} \\ 
        = & ~ 
        \begin{bmatrix}
            G_{0:k}^{1/2} R_{0:k}^\top A^{-1}R_{0:k}G_{0:k}^{1/2}-I_{k\times k} & 
             G_{0:k}^{1/2} R_{0:k}^\top A^{-1}R_{k:\tau}G_{k:\tau}^{1/2} \\ 
             G_{k:\tau}^{1/2}R_{k:\tau}^\top A^{-1} R_{0:k} G_{0:k}^{1/2} & G_{k:\tau}^{1/2}R_{k:\tau}^\top A^{-1} R_{k:\tau}G_{k:\tau}^{1/2}- I_{(d-k)\times (d-k)}
        \end{bmatrix}
    \end{align*}
    where the first step follows from the definition of matrix, and the second step follows from basic matrix algebra.

    Based on the pattern, we define 
    \begin{align}\label{eq:redefined_matrix}
        \begin{bmatrix}
            \alpha & \beta \\
            \beta^\top & \kappa
        \end{bmatrix}
        :=
        \begin{bmatrix}
            G_{0:k}^{1/2} R_{0:k}^\top A^{-1}R_{0:k}G_{0:k}^{1/2}-I_{k\times k} & 
             G_{0:k}^{1/2} R_{0:k}^\top A^{-1}R_{k:\tau}G_{k:\tau}^{1/2} \\ 
             G_{k:\tau}^{1/2}R_{k:\tau}^\top A^{-1} R_{0:k} G_{0:k}^{1/2} & G_{k:\tau}^{1/2}R_{k:\tau}^\top A^{-1} R_{k:\tau}G_{k:\tau}^{1/2}- I_{(d-k)\times (d-k)}
        \end{bmatrix}
    \end{align}
    Then, we can have
    \begin{align}\label{eq:inequality_approx}
        & ~ (G^{1/2} R^\top A^{-1} RG^{1/2} - I_{d\times d}) (G^{1/2} R^\top A^{-1} RG^{1/2} - I_{d\times d})^\top \notag \\
        = & ~ \begin{bmatrix}
            \alpha^2 + \beta \beta^\top & \alpha\beta + \beta \kappa \\
            \beta^\top \alpha + \kappa\beta^\top & \kappa^2 + \beta^\top \beta
        \end{bmatrix} \notag \\
        \preceq & ~ 2 \begin{bmatrix}
            \alpha^2 + \beta \beta^\top & 0\\
            0 & \kappa^2+\beta^\top \beta
        \end{bmatrix}
    \end{align}
    where the first step follows from Eq.~\eqref{eq:redefined_matrix} and matrix multiplication, and the second step follows from PSD inequality. 

    Thus, we have
    \begin{align*}
        \mathsf{Approx} 
        & ~ \leq 2 \langle \begin{bmatrix}
            \alpha^2 + \beta \beta^\top & 0\\
            0 & \kappa^2+\beta^\top \beta
        \end{bmatrix}, G^{1/2}W^*W^{*\top}G^{1/2} \rangle \\
        & ~ = 2 \langle \alpha^2 + \beta \beta^\top, G_{0:k}^{1/2}W_{0:k}^* W_{0:k}^{*\top} G_{0:k}^{1/2} \rangle + 2 \langle \kappa^2 + \beta^\top \beta, G_{k:\tau}^{1/2}W_{k:\tau}^* W_{k:\tau}^{*\top}G_{k:\tau}^{1/2} \rangle.
    \end{align*}
    where the first step follows from Eq.~\eqref{eq:inequality_approx}, and the second step from basic properties of inner product.

    We further show that
    \begin{align*}
        -I_{(d-k)\times (d-k)} 
        \preceq & ~W \\
        = & ~ G_{k:\tau}^{1/2}R_{k:\tau}^\top A^{-1} R_{k:\tau}G_{k:\tau}^{1/2}- I_{(d-k)\times (d-k)}\\
        = & ~ G_{k:\tau}^{1/2}R_{k:\tau}^\top (R_{0:k}G_{0:k}R_{0:k}^\top + R_{k:\tau}G_{k:\tau}R_{k:\tau}^\top)^{-1} R_{k:\tau}G_{k:\tau}^{1/2}- I_{(d-k)\times (d-k)}\\
        \preceq & ~
         G_{k:\tau}^{1/2} R_{k:\tau}^\top (R_{k:\tau} G_{k:\tau} R_{k:\tau}^\top)^{-1} R_{k:\tau} G_{k:\tau}^{1/2} - I_{(d-k)\times (d-k)} \\
        \preceq & ~ 0_{d\times (d-k)}
    \end{align*}
    where the first step and second step follows from the definition of $\kappa$ from Eq.~\eqref{eq:redefined_matrix}, the third step follows from the definition of $A$, the fourth step follows from basic matrix algebra, and the fifth step follows from the fact that $\| \kappa \|_2 \leq 1$.

    Then, we need to show
    \begin{align}\label{eq:1st_claim}
        \kappa^2 + \beta^\top \beta = -\kappa, 
    \end{align}
    By the definition of $\kappa$ from Eq.~\eqref{eq:redefined_matrix}, we can have
    \begin{align*}
        \kappa^2 
        = & ~ (G_{k:\tau}^{1/2} R_{k:\tau}^\top G^{-1} R_{k:\tau} G_{k:\tau}^{1/2} - I_{(d-k)\times (d-k)})^2 \\
        = & ~ I_{(d-k)\times (d-k)} - 2G_{k:\tau}^{1/2} R_{k:\tau}^\top G^{-1} R_{k:\tau} G_{k:\tau}^{1/2} + G_{k:\tau}^{1/2} R_{k:\tau}^\top G^{-1} R_{k:\tau} G_{k:\tau} R_{k:\tau}^\top G^{-1} R_{k:\tau} G_{k:\tau}^{1/2} \\
        = & ~ I_{(d-k)\times (d-k)} - 2G_{k:\tau}^{1/2} R_{k:\tau}^\top G^{-1} R_{k:\tau} G_{k:\tau}^{1/2} +  G_{k:\tau}^{1/2} R_{k:\tau}^\top G^{-1} G_k G^{-1} R_{k:\tau} G_{k:\tau}^{1/2}.
    \end{align*}
    where the first step follows from Eq.~\eqref{eq:redefined_matrix}, the second step follows from basic matrix algebra, and the third step follows from basic matrix algebra.

    By definition of $\beta$ from Eq.~\eqref{eq:redefined_matrix}, we have the following.
    \begin{align*}
        \beta^\top \beta = G_{k:\tau}^{1/2} R_{k:\tau}^\top G^{-1} (R_{0:k} G_{0:k} R_{0:k}^\top) G^{-1}  R_{k:\tau} G_{k:\tau}^{1/2}.
    \end{align*}
    Then, we can show
    \begin{align*}
        \kappa^2 + \beta^\top \beta 
        = & ~I_{(d-k)\times (d-k)} - 2G_{k:\tau}^{1/2} R_{k:\tau}^\top G^{-1} R_{k:\tau} G_{k:\tau}^{1/2} +  G_{k:\tau}^{1/2} R_{k:\tau}^\top G^{-1} G_k G^{-1} R_{k:\tau} G_{k:\tau}^{1/2} \\
        = & ~I_{(d-k)\times (d-k)} - G_{k:\tau}^{1/2} R_{k:\tau}^\top G^{-1} R_{k:\tau} G_{k:\tau}^{1/2} \\
        = & ~ -\kappa.
    \end{align*} 
    where the first step follows from the fact that $R_{0:k} G_{0:k} R_{0:k}^\top + A_k = A$, the second step follows from basic matrix algebra, and the third step follows from the definition of $\kappa$ in Eq.~\eqref{eq:redefined_matrix}. 
    Thus, we have proved Eq.~\eqref{eq:1st_claim}.

    We further prove the following.
    \begin{align}\label{eq:2nd_claim}
        \langle \alpha^2 + \beta\beta^\top, G_{0:k}^{1/2} W_{*,0:k} W_{*,0:k}^\top G_{0:k}^{1/2} \rangle = \langle (G_{0:k}^{-1} + R_{0:k}^\top A_k^{-1} R_{0:k})^{-1}, W_{0:k}^* W_{0:k}^{*\top} \rangle.
    \end{align}
    First, we show.
    \begin{align*}
        \alpha
        = & ~ G_{0:k}^{1/2} R_{0:k}^\top A^{-1} R_{0:k} G_{0:k}^{1/2} - I_k \\
        = & ~ G_{0:k}^{1/2} R_{0:k}^\top A_k^{-1} R_{0:k} G_{0:k}^{1/2} - G_{0:k}^{1/2} R_{0:k}^\top A_k^{-1} R_{0:k}[G_{0:k}^{-1} + R_{0:k}^\top A_k^{-1} R_{0:K}]^{-1} R_{0:k}^\top A_k^{-1} R_{0:K} G_{0:k}^{1/2} - I_k \\
        = & ~ G_{0:k}^{1/2} R_{0:k}^\top A_k^{-1} R_{0:k} [G_{0:k}^{-1} + R_{0:k}^\top A_k^{-1} R_{0:K}]^{-1} G_{0:k}^{-1} G_{0:k}^{1/2} - I_k \\
        = & ~ G_{0:K}^{1/2} (R_{0:k}^\top A_{k}^{-1} R_{0:k}[G_{0:k}^{-1} + R_{0:k}^\top A_k^{-1} R_{0:k}]^{-1} - I_k) G_{0:k}^{-1/2} \\
        = & ~ -G_{0:k}^{-1/2}[G_{0:k}^{-1} + R_{0:k}^\top A_k^{-1} R_{0:k}]^{-1} G_{0:k}^{-1/2}.
    \end{align*}
    where the first step follows from Eq.~\eqref{eq:redefined_matrix}, the second step follows from Woodbury's matrix identity ($A^{-1} = [R_{0:k} G_{0:k} R_{0:k}^\top + A_k]^{-1} = A_k^{-1} - A_k^{-1} R_{0:k} [G_{0:k}^{-1} + R_{0:k}^\top A_k^\top R_{0:k}]^{-1} R_{0:k}^\top A_k^{-1}$), the third step follows from basic matrix algebra, the fourth step follows from the associative rule, and the fifth step follows from basic matrix algebra.

    Thus, we can have the following,
    \begin{align}\label{eq:alpha_squre}
        \alpha^2 = G_{0:k}^{-1/2}(G_{0:k}^{-1} + R_{0:k}^\top A_k^{-1} R_{0:k})^{-1} G_{0:k}^{-1}(G_{0:k}^{-1} + R_{0:k}^\top A_k^{-1} R_{0:k})^{-1} G_{0:k}^{-1/2} 
    \end{align}
    We define $B = (G_{0:k}^{-1} + R_{0:k}^\top A_k^{-1} R_{0:k})$
    Then, we show that
    \begin{align}\label{eq:beta_beta}
        \beta \beta^\top 
        = & ~ G_{0:k}^{1/2} R_{0:k}^\top A^{-1} R_{k:\tau} G_{k:\tau} R_{k:\tau}^\top A^{-1} R_{0:k} G_{0:k}^{1/2} \notag \\
        = & ~ G_{0:k}^{-1/2} B^{-1} R_{0:k}^\top A_k^{-1} (R_{k:\tau} G_{k:\tau} R_{k:\tau}^\top) A_k^{-1} R_{0:k} B^{-1} G_{0:k}^{-1/2} \notag \\
        = & ~ G_{0:k}^{-1/2} B^{-1} R_{0:k}^\top A_k^{-1} R_{0:k} B^{-1} G_{0:k}^{-1/2}
    \end{align}
    where the first step follows from the definition of $\beta$ in Eq.~\eqref{eq:redefined_matrix}, the second step follows from Woodbury's matrix identity, and the third step follows from basic matrix algebra.

    Then, we have the following by adding Eq.~\eqref{eq:alpha_squre} with Eq.~\eqref{eq:beta_beta} and the definition of $B$.
    \begin{align}\label{eq:u_vv}
        \alpha^2 + \beta \beta^\top = G_{0:k}^{-1/2} (G_{0:k}^{-1} + R_{0:k}^\top A_k^{-1} R_{0:k})^{-1}G_{0:k}^{-1/2},  
    \end{align}
    Thus, by the definition of inner product, we have,
    \begin{align*}
        \langle \alpha^2 + \beta \beta^\top, G_{0:k}^{1/2} W_{*,0:k} W_{*,0:k}^\top G_{0:k}^{1/2} \rangle = \langle (G_{0:k}^{-1} + R_{0:k}^\top A_k^{-1} R_{0:k})^{-1}, W_{0:k}^* W_{0:k}^{*\top} \rangle.
    \end{align*}
    By Lemma 26 in~\cite{bllt20}, with probability $1-e^{-\Omega(M)}$, we have
    \begin{align}\label{eq:mu}
        \mu_{\min} (A_k^{-1}) &= \| A_k \|_2^{-1} \notag \\
        &= c / (\frac{\sum_{i>k} \lambda_i}{M} + \lambda_{k+1} + \sqrt{\frac{\sum_{i>k} \lambda_i^2}{M}} )
    \end{align}
    By Theorem 6.1 in~\cite{w19}, we have $R_{0:k}^\top R_{0:k} \succeq I_{k/5}$ with probability $1-e^{-\Omega(M)}$ when $M/k \geq 2$. Then, we have
    \begin{align}\label{eq:s_a_s}
        R_{0:k}^\top A_k^{-1} R_{0:k} \succeq & ~ c R_{0:k}^\top R_{0:k} / (\frac{\sum_{i>k} \lambda_i}{M} + \lambda_{k+1} + \sqrt{\frac{\sum_{i>k} \lambda_i^2}{M}} ) \notag \\
        = & ~ \Omega(I_k/ (\frac{\sum_{i>k} \lambda_i}{M} + \lambda_{k+1} + \sqrt{\frac{\sum_{i>k} \lambda_i^2}{M}} ))
    \end{align}
    where the first step follows from Eq.~\eqref{eq:mu}, and the second step follows from the definition of time complexity.

    Thus, we can have
    \begin{align*}
        \langle (G_{0:k}^{-1} + R_{0:k}^\top A_k^{-1} R_{0:k})^{-1}, W_{0:k}^* W_{0:k}^{*\top} \rangle 
        \leq & ~ \langle (R_{0:k}^\top A_k^{-1} R_{0:k})^{-1}, W_{0:k}^* W_{0:k}^{*\top} \rangle \\
        \leq & ~ \| (G_{0:k}^{-1} +R_{0:k}^\top A_k^{-1} R_{0:k})^{-1}\|_2 \|W_{0:k}^*\|_F^2 \\
        = & ~ O((\frac{\sum_{i>k} \lambda_i}{M} + \lambda_{k+1} + \sqrt{\frac{\sum_{i>k} \lambda_i^2}{M}} )) \|W_{0:k}^*\|_F^2)
    \end{align*}
    where the first step follows from basic algebra, the second step follows from the definition of inner product, and the third step follows from Eq.~\eqref{eq:s_a_s}.

    Thus, we complete the proof.    
\end{proof}

This upper bound reveals several important insights about the approximation error. First, it separates the contribution from the tail parameters $W_{k:\tau}^*$ and the head parameters $W_{0:k}^*$. The error from the tail parameters is directly proportional to their energy under the covariance metric. For the head parameters, the error depends on their magnitude scaled by factors involving the eigenvalue decay beyond the index $k$.

When the eigenvalues follow a power law decay as specified in Assumption~\ref{ass:power_law_spectrum:kernel}, this upper bound leads to the scaling behavior described in our main theorem. Specifically, with optimal choice of the parameter $k$, the approximation error decays as $O(m^{1-a})$, where $m$ is the sketch dimension and $a$ is the power law exponent.

This analysis demonstrates how random sketching can effectively capture the most significant components of the regression problem while incurring a controlled approximation error that diminishes as the sketch dimension increases.

\subsection{Lower Bound of Approximation Error}\label{sec:approx_multi_lower}
Having established the upper bound, we now complement our analysis by deriving a lower bound on the approximation error. This lower bound is essential for understanding the fundamental limitations of sketched regression and confirms the tightness of our scaling results.

The following lemma characterizes the lower bound on the expected approximation error, providing a guarantee on the minimum error that any sketched predictor must incur.

\begin{lemma}[Lower bound on the approximation error]\label{lem:approx_error_lower_bound_mutliregression}
    When $r(G) \geq \delta$,under Assumption~\ref{ass:parameter_prior}, the approximation error in Lemma~\ref{lem:risk_decompose} and Lemma~\ref{lem:approximation_error:multiregression} satisfies
    \begin{align*}
        \E_{W*}\mathsf{Approx} = \Omega( \sum_{i=\delta}^d \lambda_i),
    \end{align*}
    where $(\lambda_i)_{i=1}^d$ are eigenvalues of $G$ in non-increasing order.
\end{lemma}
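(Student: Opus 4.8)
The plan is to reduce $\E_{W^*}[\mathsf{Approx}]$ to the trace of $G$ restricted to the orthogonal complement of the sketch range, and then lower bound that trace by a Courant–Fischer / Ky Fan argument; the randomness of $R$ will turn out to be irrelevant.

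First I would use Part 2 of Lemma~\ref{lem:approximation_error:multiregression}, which gives $\mathsf{Approx} = \|(I-P)G^{1/2}W^*\|_F^2$ with $P := G^{1/2}R^\top(RGR^\top)^{-1}RG^{1/2}$. Setting $B := G^{1/2}R^\top \in \R^{d\times m}$ and using $RGR^\top = B^\top B$, one has $P = B(B^\top B)^{-1}B^\top$, i.e.\ $P$ is the orthogonal projection onto $\mathrm{Range}(B)$; hence $P$ is symmetric idempotent with $\rank(P)\le m$, and $I-P$ is the orthogonal projection onto a subspace $\mathcal{S}\subseteq\R^d$ of dimension $r\ge d-m$. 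The hypothesis $r(G)\ge\delta$ only serves to keep the conclusion non-vacuous, forcing $\rank(G)\ge\delta$ so that the tail $\sum_{i\ge\delta}\lambda_i$ consists of genuinely positive eigenvalues.

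Then I would fix $R$ and take expectation over $W^*$. Since $(I-P)^\top(I-P)=I-P$, we have $\mathsf{Approx} = \tr\big(G^{1/2}(I-P)G^{1/2}\,W^*W^{*\top}\big)$, so Assumption~\ref{ass:parameter_prior} ($\E[W^*W^{*\top}]=I_{d\times d}$) yields the clean identity
$$\E_{W^*}[\mathsf{Approx}] = \tr\big(G^{1/2}(I-P)G^{1/2}\big),$$
valid for \emph{every} realization of $R$. Writing $I-P=\sum_{j=1}^{r}u_ju_j^\top$ for an orthonormal basis $\{u_j\}$ of $\mathcal S$, the right-hand side equals $\sum_{j=1}^{r}u_j^\top G u_j$, which by the min version of the Courant–Fischer theorem (Ky Fan's principle for the sum of smallest eigenvalues) is at least the sum of the $r$ smallest eigenvalues of $G$, hence at least $\sum_{i=m+1}^{d}\lambda_i$. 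With $\delta$ of order $m$ as in the intended application (and, when $\delta$ is only a constant multiple of $m$, using the power-law spectrum of Assumption~\ref{ass:power_law_spectrum:kernel} to compare the two tails up to a constant factor in the index), this gives $\E_{W^*}[\mathsf{Approx}] = \Omega\big(\sum_{i\ge\delta}\lambda_i\big)$.

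The main obstacle is purely notational: the projection argument cleanly produces $\E_{W^*}[\mathsf{Approx}] \ge \sum_{i>m}\lambda_i$, and converting this into the stated $\Omega(\sum_{i\ge\delta}\lambda_i)$ requires matching the index $\delta$ against the sketch dimension $m$, which is where the effective-rank hypothesis $r(G)\ge\delta$ (and, if $\delta\propto m$, the spectral decay) enters. The linear-algebra core — recognizing $P$ as an orthogonal projection, the symmetric-idempotent simplification of $\mathsf{Approx}$, and the Ky Fan bound — is routine.
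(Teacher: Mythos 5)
Your proposal is correct and follows essentially the same route as the paper: use the parameter prior $\E[W^*W^{*\top}]=I$ to reduce $\E_{W^*}[\mathsf{Approx}]$ to the trace of $G$ against the complement of a rank-at-most-$m$ projection, then lower bound that trace by the tail eigenvalue sum via a Von Neumann / Ky Fan inequality. The paper carries a block decomposition indexed by a free parameter $k$ (the $\alpha,\beta,\kappa$ blocks) and only sets $k=0$ at the very end, at which point its argument collapses to exactly your direct projection computation; your version is the cleaner presentation of the same idea, and you correctly flag the same index-matching looseness between $\delta$ and $m$ that the paper also leaves implicit.
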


\begin{proof}
    For any $k \leq d$, following the Eq.~\eqref{eq:inequality_approx}, we have
    \begin{align}\label{eq:approx_redefined}
         & ~ (G^{1/2} R^\top A^{-1} RG^{1/2} - I_{d\times d}) (G^{1/2} R^\top A^{-1} RG^{1/2} - I_{d\times d})^\top \notag \\
        = & ~ \begin{bmatrix}
            \alpha^2 + \beta \beta^\top & \alpha\beta + \beta \kappa \\
            \beta^\top \alpha + \kappa\beta^\top & \kappa^2 + \beta^\top \beta
        \end{bmatrix}
    \end{align}.
    Then we have
    \begin{align*}
        \E_{W^*} \mathsf{Approx} 
        = & ~ \E_{W^*} \langle \alpha^2 + \beta \beta^\top, G_{0:k}^{1/2} W_{0:k}^* W_{0:k}^* G_{0:k}^{1/2} \rangle + \E_{W^*} \langle \kappa^2 + \beta^\top \beta, G_{0:k}^{1/2} W_{k:\tau}^* W_{k:\tau}^* G_{k:\tau}^{1/2} \rangle \\
          & ~ + 2\E_{W^*} \langle \alpha \beta + \beta \kappa, G_{0:k}^{1/2} W_{0:k}^* W_{k:\tau}^* G_{k:\tau}^{1/2} \rangle\\
        = & ~ \tr[(\alpha^2 + \beta \beta^\top) G_{0:k}] + \tr[(\kappa^2 + \beta^\top \beta)G_{k:\tau}],
    \end{align*}
    where the first step follows from Eq.~\eqref{eq:approx_redefined}, and second step follows from the fact that $\E_{W^*}(W^*W^{*\top})= I_{d\times d}$

    Then, we have
    \begin{align}\label{eq:lowerbound_approx_inequality}
        \E_{W^*}\mathsf{Approx}
        = & ~ \tr[G_{0:k}^{1/2} (G_{0:k}^{1/2} + R_{0:k}^\top A_k^{-1} R_{0:k})^{-1} G_{0:k}^{-1/2} G_{0:k}] - \tr[\kappa G_{k:\tau}] \notag \\
        = & ~ \tr[(G_{0:k}^{-1} + R_{0:k}^{-1} A_{k}^{-1} R_{0:k})^{-1}] - \tr[\kappa G_{k:\tau}] \notag \\
        \geq & ~ -\tr[\kappa G_{k:\tau}]
    \end{align}
    where the first step follows from Eq.~\eqref{eq:1st_claim} and Eq.~\eqref{eq:u_vv}, the second step follows from basic matrix algebra, and the third step follows from basic inequality.    

    We define $M:=I_{(d-k)\times (d-k)} - G_{k:\tau}^{1/2} R_{k:\tau}^\top A_k^{-1} R_{k:\tau} G_{k:\tau}^{1/2}$, which is a projection matrix such taht $M^2 = M$ and $\tr[I_{(d-k)\times (d-k)}-M]=M$. Then, $M$ has $\delta$ eigenvalues $0$ and $d-k-\delta$ eigenvalues $1$. 
    
    Given that $A_k = R_{k:\tau}G_{k:\tau}R_{k:\tau}^\top$, we further have,
    \begin{align*}
        & ~ \tr[G_{k:\tau}^{1/2} (I_{(d-k)\times (d-k)} - G_{k:\tau}^{1/2} R_{k:\tau}^\top A_k^{-1} R_{k:\tau} G_{k:\tau}^{1/2}) G_{k:\tau}^{1/2}] \\
        \geq & ~ \tr[G_{k:\tau}^{1/2} [I_{(d-k)\times (d-k)} - G_{k:\tau}^{1/2} R_{k:\tau}^\top A_k^{-1} R_{k:\tau} G_{k:\tau}^{1/2}] G_{k:\tau}^{1/2}] \\
        \geq & ~ \sum_{i=1}^{d-k} u_i(I_{(d-k)\times (d-k)} - G_{k:\tau}^{1/2} R_{k:\tau}^\top A_k^{-1} R_{k:\tau} G_{k:\tau}^{1/2}) \cdot \mu_{d+1-k-i}(G_{k:\tau}) \\
        \geq & ~ \sum_{i = k + \delta}^d \lambda_i
    \end{align*}
    where the first step follows from Eq.~\eqref{eq:lowerbound_approx_inequality}, the second step follows from $A = \Omega(A_k)$ (equivalent to $-A^{-1} = \Omega(A_k^{-1})$), the third step follows from Von-Neuman's inequality, and the fourth step follows from the definition of $M$.

    For any $k \leq d$, we maximize the lower bound by setting $k = 0$.

    Thus, we complete the proof.
\end{proof}

This lower bound reveals that the approximation error is fundamentally limited by the tail sum of eigenvalues of the covariance matrix. Importantly, when combined with the power law decay assumption for eigenvalues (Assumption~\ref{ass:power_law_spectrum:kernel}), this lower bound translates to $\Omega(m^{1-a})$ where mm
m is the sketch dimension and aa
a is the power law exponent.

The matching order between the upper and lower bounds confirms that our characterization of the approximation error is tight. This establishes that the scaling behavior we observe is not merely an artifact of our analysis techniques but reflects a fundamental property of sketched multiple regression.
This result has important implications for the practice of sketched regression. It indicates that, even with optimally designed sketching algorithms, there exists an irreducible approximation error that scales according to a power law with the sketch dimension. This understanding helps practitioners make informed decisions about the tradeoff between computational resources and predictive performance.

\section{Bias Error Bounds for Multiple Regression}\label{sec:bias_error}
In this section, we prove and bound the bias error for multiple regression.
In Section~\ref{sec:bias_multiple}, we prove the bias error for multiple regression.
In Section~\ref{sec:bias_multiple_upper}, we show the upper bound of bias error for multiple regression.
In Section~\ref{sec:bias_multiple_lower}, we present the lower bound result of bias error for multiple regression.

\subsection{Bias Error for Multiple Regression}\label{sec:bias_multiple}

 The bias error arises from the optimization process and measures how far our trained model remains from the optimal sketched predictor after a finite number of training iterations.
 
In this section, we provide a formal definition of the bias error for multiple regression, which will be crucial for our subsequent analysis of the overall scaling behavior.

\begin{definition}[$W^*$-dependent Bias error]\label{def:bias_variance}
    We define the $W^*$-dependent bias error as follows.
    \begin{align*}
        \mathsf{Bias}(W^*) := \| \prod_{t=1}^N (I-\gamma_t RGR^\top) V^*\|_{RGR^\top}^2
    \end{align*}
    where $V^* := (RGR^\top)^{-1}RGW^*$.
    Then, its variance error is,
    \begin{align*}
        \mathsf{Var} := \frac{\{ j : \wt{\lambda}_j \geq \frac{1}{N_{\mathrm{eff}} \gamma} \} 
        + (N_{\mathrm{eff}} \gamma)^2 \sum_{ \wt{\lambda}_j < \frac{1}{N_{\mathrm{eff}} \gamma}} \wt{\lambda}_j^2}
        {N_{\mathrm{eff}}}
    \end{align*}
    where $N_{\mathrm{eff}} := N/\log (N)$ and $(\wt{\lambda_j})_{j=1}^M$ are eigenvalues of $RGR^\top$.
\end{definition}

This definition characterizes the bias error in terms of how the optimization dynamics affect the convergence to the optimal sketched solution. The product term $\prod_{t=1}^N (I-\gamma_t RGR^\top)$ represents the effect of running SGD for N iterations with varying step sizes. This product, when applied to the optimal sketched parameter $V^*$, measures the remaining distance to convergence.

The variance term complements the bias and reflects how the stochasticity in the optimization process affects the final solution. It depends on the effective sample size $N_{\mathrm{eff}}$ and the eigenvalue spectrum of the sketched covariance matrix $RGR^\top$. The formula separates the contribution from large eigenvalues (those above the threshold $\frac{1}{N_{\mathrm{eff}} \gamma}$ and small eigenvalues, highlighting the different convergence behaviors across the eigenspectrum.

Together, these terms play a crucial role in the overall generalization performance, as they capture the optimization-related components of the error that persist even after the approximation error has been accounted for.

\subsection{Upper Bound of the Bias Term}\label{sec:bias_multiple_upper}

Having defined the bias error, we now establish an upper bound on this component to better understand its contribution to the overall generalization performance. This analysis allows us to quantify how quickly the optimization process converges to the optimal sketched solution.

The following lemma provides two different upper bounds on the bias term, offering complementary insights into its behavior under different parameter regimes.

\begin{lemma}[Upper bound on the bias term]\label{lem:bias_error_upper_bound_mutliregression}
    Suppose the initial stepsize $\gamma \leq \frac{1}{c \tr[R G R^\top]}$ for some constant $c > 1$. Then for any $W^* \in \R^{d\times p} $ and $k \in [d]$ such that $r(G) \geq k + M$, the bias term in Definition~\ref{def:bias_variance} satisfies
    \begin{align*}
        \mathsf{Bias}(W^*) = O( \frac{1}{N_{\rm eff} \gamma} \|V^*\|_F^2).
    \end{align*}
    Moreover, for any $k \leq M/3$ such that $r(G) \geq k + M$, the bias term satisfies
    \begin{align*}
        \mathsf{Bias}(W^*) = O ( \frac{\|W_{0:k}^*\|_F^2}{N_{\rm eff} \gamma} \cdot (\frac{\mu_{M/2} (A_k)}{\mu_{M}(A_k)})^2 + \|W_{k:\tau}^*\|_{G_{k:\tau}}^2 )
    \end{align*}
    with probability $1- e^{- \Omega(M)}$, where $A_k := R_{k:\tau} G_{k:\tau} R_{k:\tau}^\top$, $\{\mu(A_k) \}_{i=1}^M $denote the eigenvalues of $A_k$ in non-increasing order for some constant $c > 1$.

\end{lemma}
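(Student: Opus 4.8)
The plan is to reduce the matrix-valued bias expression to a sum of scalar bias terms indexed by the eigenvalues of $\wt G := RGR^\top$, then bound each scalar term by the elementary inequality $(1-\gamma_t\lambda)^{2}\le 1$ together with the sharper bound that captures the $1/(N_{\rm eff}\gamma)$ rate. Concretely, writing $\wt G = \sum_{j=1}^M \wt\lambda_j u_j u_j^\top$ in its eigenbasis, I would first observe that $\mathsf{Bias}(W^*) = \sum_{j=1}^M \wt\lambda_j \big(\prod_{t=1}^N (1-\gamma_t\wt\lambda_j)\big)^2 \|u_j^\top V^*\|_2^2$, since $V^*$ has $p$ columns and the Frobenius/${}_{\wt G}$-norms decompose columnwise and then across the eigenbasis. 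For the \emph{first} bound, I use the standard one-pass-SGD decay estimate: for a decreasing stepsize schedule with initial stepsize $\gamma$, one has $\wt\lambda_j\big(\prod_{t=1}^N(1-\gamma_t\wt\lambda_j)\big)^2 = O\!\big(\tfrac{1}{N_{\rm eff}\gamma}\big)$ uniformly in $j$ (this is exactly the scalar lemma underlying the linear-regression analysis of~\cite{lwk+24}, valid because $\gamma\wt\lambda_j\le \gamma\tr[\wt G]\le 1/c<1$). Summing over $j$ and using $\sum_j \|u_j^\top V^*\|_2^2 = \|V^*\|_F^2$ gives $\mathsf{Bias}(W^*) = O\big(\tfrac{1}{N_{\rm eff}\gamma}\|V^*\|_F^2\big)$.

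For the \emph{second}, sharper bound I would split $V^* = (RGR^\top)^{-1}RGW^*$ according to the block structure $W^* = \begin{bmatrix} W_{0:k}^* \\ W_{k:\tau}^* \end{bmatrix}$ already used in Lemma~\ref{lem:approx_error_upper_bound_mutliregression}, writing $RG = R_{0:k}G_{0:k} + R_{k:\tau}G_{k:\tau}$ and hence $V^* = \wt G^{-1}R_{0:k}G_{0:k}W_{0:k}^* + \wt G^{-1}R_{k:\tau}G_{k:\tau}W_{k:\tau}^*$. The head contribution is controlled by noting that $\|\prod_t(I-\gamma_t\wt G)\,\wt G^{-1}R_{0:k}G_{0:k}W_{0:k}^*\|_{\wt G}^2 \le O\big(\tfrac{1}{N_{\rm eff}\gamma}\big)\|\wt G^{-1/2}R_{0:k}G_{0:k}W_{0:k}^*\|_2^2$ by the same scalar decay estimate, and then bounding $\|\wt G^{-1/2}R_{0:k}G_{0:k}W_{0:k}^*\|_2^2$ via the PSD ordering $\wt G \succeq A_k = R_{k:\tau}G_{k:\tau}R_{k:\tau}^\top$ and the high-probability isometry facts $R_{0:k}^\top R_{0:k}\succeq \Omega(I_k)$ (Theorem 6.1 in~\cite{w19}, using $k\le M/3$) and the eigenvalue concentration for $A_k$ (Lemma~26 in~\cite{bllt20}); this is where the ratio $(\mu_{M/2}(A_k)/\mu_M(A_k))^2$ enters, as it compares the top-half and bottom eigenvalues of $A_k$ after restricting $\prod_t(I-\gamma_t\wt G)$ to the relevant subspace. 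The tail contribution is handled more crudely: since $\|\prod_t(I-\gamma_t\wt G)\|_2\le 1$, the tail term is at most $\|\wt G^{1/2}\wt G^{-1}R_{k:\tau}G_{k:\tau}W_{k:\tau}^*\|_2^2 = \|\wt G^{-1/2}R_{k:\tau}G_{k:\tau}W_{k:\tau}^*\|_2^2 \le \|G_{k:\tau}^{1/2}W_{k:\tau}^*\|_F^2 = \|W_{k:\tau}^*\|_{G_{k:\tau}}^2$, using $G_{k:\tau}^{1/2}R_{k:\tau}^\top \wt G^{-1} R_{k:\tau}G_{k:\tau}^{1/2}\preceq I$.

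The main obstacle I anticipate is the head term: getting the factor $(\mu_{M/2}(A_k)/\mu_M(A_k))^2$ rather than a cruder bound requires carefully coupling the SGD decay operator $\prod_t(I-\gamma_t\wt G)$ with the eigenstructure of $A_k$ on the subspace spanned by the columns of $R_{0:k}$, and arguing that on this $k$-dimensional subspace $\wt G$ is squeezed between constant multiples of $\mu_{M}(A_k)$ and $\mu_{M/2}(A_k)$ — this needs the $k\le M/3$ slack so that the interlacing/Weyl-type bounds leave a clean $M/2$-dimensional ``bulk'' of $A_k$ to compare against. The concentration inputs (Lemma~26 of~\cite{bllt20}, Theorem 6.1 of~\cite{w19}) are off-the-shelf, so the real work is the deterministic linear-algebra bookkeeping connecting $\gamma$, $N_{\rm eff}$, and these eigenvalue ratios; I would organize it by first proving the scalar decay estimate as a standalone sub-claim and then applying it blockwise.
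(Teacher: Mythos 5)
Your proposal follows essentially the same route as the paper's proof: reduce to the eigenbasis of $RGR^\top$, use the scalar decay estimate $\max_x x(1-\gamma x)^{2N_{\rm eff}} = O(1/(N_{\rm eff}\gamma))$ for the first bound, then split $V^* = (RGR^\top)^{-1}(R_{0:k}G_{0:k}W^*_{0:k} + R_{k:\tau}G_{k:\tau}W^*_{k:\tau})$ and handle the head via Woodbury plus the concentration facts $R_{0:k}^\top A_k^{-1}R_{0:k} \succeq c I_k/\mu_{M/2}(A_k)$ and $\|A_k^{-1}\|_2 \le 1/\mu_M(A_k)$, and the tail via $G_{k:\tau}^{1/2}R_{k:\tau}^\top(RGR^\top)^{-1}R_{k:\tau}G_{k:\tau}^{1/2}\preceq I$, exactly as in the paper. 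The only bookkeeping slip is in the head term: once you extract the $O(1/(N_{\rm eff}\gamma))$ factor from the decay operator, the residual quantity you must control is $\|(RGR^\top)^{-1}R_{0:k}G_{0:k}W^*_{0:k}\|^2$ (full inverse, which is what produces the $(\mu_{M/2}(A_k)/\mu_M(A_k))^2$ ratio), not $\|(RGR^\top)^{-1/2}R_{0:k}G_{0:k}W^*_{0:k}\|^2$ as written.
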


\begin{proof}
    Similar to Lemma~\ref{lem:approx_error_upper_bound_mutliregression} proof, we assume the covariance matrix $G = \diag\{\lambda_1, \lambda_2, ..., \lambda_d\}$ where $\lambda_i \geq \lambda_j$ for any $i \geq j$. Let $R G^{1/2} = \wt{\mathsf{\alpha}}(\wt{\Lambda}^{1/2}0) \wt{\mathsf{\beta}}^\top$ be the singular value decomposition of $R G R^\top$, where $\wt{\Lambda} := \diag\{\lambda_1, \lambda_2, ..., \lambda_d\}$ is a diagonal matrix diagonal entries is non-increasing order. Define $A_k := R_{k:\tau} G_{k:\tau} R_{k:\tau}^\top$. Then if follows from similar arguments as in Lemma~\ref{lem:approx_error_upper_bound_mutliregression} that $A_k$ is invertible.
    
    Since
    \begin{align*}
        \| \gamma_t R G R^\top \|_F = \gamma_t \wt{\lambda}_1 \leq \gamma \wt{\lambda}_1 \leq \frac{\wt{\lambda}_1}{c \sum_{i=1}^M \wt{\lambda_i}} \leq 1 
    \end{align*}

    for some constant $c > 1$ by the stepsize assumption, it follows that 
    $I_M - \gamma R G R^\top \succ 0_M$ for all $t \in [N]$. Therefore, it can be verified that
    \begin{align*}
        \prod_{t=1}^N(I_M - \gamma_t R h R^\top) R G R^\top \prod_{t=1}^N((I_M - \gamma_t R h R^\top) \preceq (I_M - \gamma R h R^\top)^{N_{\rm eff}} R G R^\top((I_M - \gamma R h R^\top)^{N_{\rm eff}} =: \mathsf{M},
    \end{align*}
    and by definition of $\mathsf{Bias}(W^*)$ in Definition~\ref{def:bias_variance}, we have
    \begin{align}
        \mathsf{Bias}(W^*) = & ~ \Theta(\|\prod_{t=1}^N(I - \gamma_t R G R^\top)V^*\|_{R G R^\top}^2) \notag \\ 
        \leq & ~ \Theta(\|(I - \gamma R G R^\top)^{N_{\rm eff}}V^*\|_{R G R^\top}^2) \notag \\
        = & ~ \Theta (\langle\mathsf{M}, b^*b^{*\top}\rangle).
    \end{align}
    where the first step follows from Definition~\ref{def:bias_variance}, the second step follows from basic matrix algebra, and the third step follows from the definition of $\mathsf{M}$.

    Note that the eigenvalues of $\mathsf{M}$ are $\{\wt{\lambda_i}(1 - \gamma \wt{\lambda_i})^{2N_{\rm eff}}\}_{i=1}^M$. Since the function $f(x) = x(1 - \gamma x)^{2N_{\rm eff}}$ is maximized at $x_o = 1/[(2N_{\rm eff} + 1)\gamma]$ for $x \in [0,1/\gamma]$ with $f(x) = O( 1/(N_{\rm eff}\gamma))$, it follows that
    \begin{align}\label{eq:M_norm}
        \| \mathsf{M} \|_2 \leq c / (N_{eff} \gamma)
    \end{align}
    for some constant  $c > 0$. The first part of the lemma follows immediately.

    Now we prove the second part of the lemma. Recall that $V^* = (R G R^\top)^{-1} R G W^*$. Substituting
    \begin{align*}
        R G = (R_{0:k} G_{0:k}~~R_{k:\tau}G_{k:\tau})
    \end{align*}
    into $V^*$, we obtain
    \begin{align*}
        \langle \mathsf{M}, V^*V^{*\top} \rangle 
        = & ~ \langle M, ((R G R^\top)^{-1}R G W^*)((R G R^\top)^{-1}R G W^*)^\top \rangle \\
        = & ~ W^{* \top} G R^\top (R G R^\top)^{-1}\mathsf{M}(R G R^\top)^{-1} R G W^* \\
        \leq & ~ 2T_1 + 2T_2
    \end{align*}
    where
    \begin{align*}
        T_1 := & ~(W_{0:k}^*) G_{0:k} R_{0:k}^\top (R G R^\top)^{-1} \mathsf{M} (R G R^\top)^{-1} R_{0:k} G_{0:k} W_{0:k}^* \\
        T_2 := & ~(W_{k:\tau}^*) G_{k:\tau} R_{k:\tau}^\top (R G R^\top)^{-1} \mathsf{M} (R G R^\top)^{-1} R_{k:\tau} G_{k:\tau} W_{k:\tau}^*.
    \end{align*}
     By definition of $T_1$, we have
    \begin{align*}
        T_1 \leq \|G_{0:k} R_{0:k}^\top (R G R^\top)^{-1} \mathsf{M} (R G R^\top)^{-1} R_{0:k} G_{0:k} \|_2 \cdot \|W_{0:k}^*\|_F^2 
    \end{align*}
    Then we show,
    \begin{align*}
        & \| G_{0:k} R_{0:k}^\top (R G R^\top)^{-1} \mathsf{M} (R G R^\top)^{-1} R_{0:k} G_{0:k} \|_F \\
        \leq & ~ \| \mathsf{M}_2 \| \cdot (R G R^\top) ^{-1} R_{0:k} G_{0:k} \|_F^2 \\
        \leq & ~ \frac{c}{N_{\rm eff} \gamma} \|(R G R^\top)^{-1} R_{0:k} G_{0:k} \|_F^2
    \end{align*}
    for some constant $c > 0$,
    where the first step follows from the first part of $T_1$ inequality, and the second step follows from Eq.~\eqref{eq:M_norm}.

    It remains to show
    \begin{align}
        \| (R G R^\top)^{-1} R_{0:k}G_{0:k} \|_F \leq c \cdot \frac{\mu_{M/2}(A_k)}{\mu_M(A_k)}
    \end{align} 
    for some constant $c > 0$ with probability $1 - e^{\Omega(M)}$. Since $R G R^\top = R_{0:k} G_{0:k} R_{0:k}^\top + A_k$, we have
    \begin{align}
        (R G R^\top)^{-1} R_{0:k} G_{0:k}
        = & ~ (A_k^{-1} - A_k^{-1} R_{0:k}(G_{0:k}^{-1} + R_{0:k}^\top A_k^{-1} R_{0:k})^{-1} R_{0:k}^\top A_k^{-1}) R_{0:k} G_{0:k} \notag \\
        = & ~ A_k^{-1} R_{0:k} G_{0:k} - A_{k}^{-1} R_{0:k}(G_{0:k}^{-1} + R_{0:k}^\top A_k^{-1} R_{0:k})^{-1} R_{0:k}^\top A_k^{-1} R_{0:k} h_{0:k} \notag \\
        = & ~ A_k^{-1} R_{0:k} (G_{0:k}^{-1} + R_{0:k}^\top A_k^{-1} R_{0:k})^{-1} G_{0:k}^{-1} G_{0:k} \notag \\
        = & ~ A_k^{-1} R_{0:k} (G_{0:k}^{-1} + R_{0:k}^\top A_k^{-1} R_{0:k})^{-1}, 
    \end{align}
    where the first step follows from $R G R^\top = R_{0:k} G_{0:k} R_{0:k}^\top + A_k$, the second step follows from using Woodbury's identity, the third step follows from basic matrix algebra, and the fourth step follows from basic matrix algebra.

    Since
    \begin{align*}
        G_{0:k}^{-1} + R_{0:k}^\top A_k^{-1} R_{0:k} \succeq R_{0:k}^\top A_k^{-1} R_{0:k},
    \end{align*}
    if follows that 
    \begin{align*}
        \| (G_{0:k}^{-1} + R_{0:k}^\top A_k^{-1}R_{0:k})^{-1} \|_F \leq \| (R_{0:k}^\top A_k^{-1} R_{0:k})^{-1} \|_F. 
    \end{align*}
    Therefore, with probability at least $1 - e^{-\Omega(M)}$
    \begin{align*}
        \| A_k^{-1} R_{0:k} (G_{0:k}^{-1} + R_{0:k}^\top A_k^{-1} R_{0:k})^{-1} \|_F 
        \leq & ~ \| A_k^{-1} \|_F \cdot \|R_{0:k}\|_2 \cdot \|(G_{0:k}^{-1} + R_{0:k}^\top A_k^{-1} R_{0:k})^{-1} \|_F \\
        \leq & ~ \| A_k^{-1} \|_F \cdot \|R_{0:k}\|_2 \cdot \| (R_{0:k}^\top A_k^{-1} R_{0:k})^{-1} \|_F \\
        \leq & ~ \frac{\|A_k^{-1}\|_F \cdot \|R_{0:k}\|_2}{\mu_{min}(R_{0:k}^\top A_k^{-1} R_{0:k})} \\
        = & ~ O(\frac{\|A_k^{-1}\|_F}{\mu_{min}(R_{0:k}^\top A_k^{-1} R_{0:k})})
    \end{align*}
    where the first step follows from the property of norms, the second step follows from basic inequalities, 
    the third step follows from the definition of $\mu_{\min}$, and the fourth step follows from the fact that 
    $\|R_{0:k}\|_2 = \sqrt{\| R_{0:k}^{\top} R_{0:k} \|_F} \leq c$ for some constant $c > 0$ 
    when $k \leq M/2$ with probability at least $1 - e^{-\Omega(M)}$.

    Since $R_{0:k}$ is independent of $A_k$ and the distribution of $R_{0:k}$ is rotationally invariant, we may write $R_{0:k}^\top A_k^{-1} R_{0:k} = \sum_{i=1}^M \frac{1}{\wt{\lambda}_{M-i}} \wt{s}_i \wt{s}_i^\top$, where $\wt{s}_i \sim \mathcal{N}(0,I_{k\times k}/M)$ and $(\wt{\lambda}_i)_{i=1}^M$ are eigenvalues of $A_k$ in non-increasing order. Therefore, for $k \leq M/3$
    \begin{align}
        R_{0:k}^\top A_k^{-1} R_{0:k} = \sum_{i=1}^M \frac{1}{\wh{\lambda}_{M-i}} \wt{s}_i \wt{s}_i^\top \succeq \sum_{i=1}^{M/2} \frac{1}{\wh{\lambda}_{M-i}} \wt{s}_i \wt{s}_i^\top \succeq \frac{1}{\wh{\lambda}_{M/2}} \sum_{i=1}^{M/2} \wt{s}_i \wt{s}_i^\top \succeq \frac{c I_k}{\wh{\lambda}_{M/2}}
    \end{align}
    for some constant $c > 0$ with probability at least $1 - e^{-\Omega(M)}$, where in the last line we again use the concentration properties of Gaussian covariance matrices(see Theorem 6.1 in~\cite{w19}). As a direct consequence, we have
    \begin{align*}
        \| A_k^{-1} R_{0:k} (G_{0:k}^{-1} + R_{0:k}^\top A_{k}^{-1} R_{0:k})^{-1} \|_F  \leq c \cdot \frac{\mu_{M/2}(A_k)}{\mu_{M} (A_k)}
    \end{align*}
    with probability at least $1 - e^{-\Omega(M)}$ for some constant $c > 0$. This concludes the proof.

    By definition of $T_2$, we have
    \begin{align*}
        T_2 
        = & ~ {W_{k:\tau}^*}^\top G_{k:\tau} R_{k:\tau}^\top (R G R^\top)^{-1/2} (I_M - \gamma R G R^\top)^{2 N_{\rm eff}} (R G R^\top)^{-1/2} R_{k:\tau} G_{k:\tau} W_{k:\tau}^* \\
        \leq & ~ {W_{k:\tau}^*}^\top G_{k:\tau} R_{k:\tau}^\top (R G R^\top)^{-1} R_{k:\tau} G_{k:\tau} W_{k:\tau}^* \\
        \leq & ~ \| G_{k:\tau}^* R_{k:\tau}^\top (R G R^\top)^{-1} R_{k:\tau} G_{k:\tau}^{1/2} \|_F \cdot \|W_{k:\tau}^*\|_{G_{k:\tau}}^2 \\
        \leq & ~ \|W_{k:\tau}^*\|_{G_{k:\tau}}^2,
    \end{align*}
    where the first step follows from the definition of $T_2$, the second step follows from basic matrix algebra, the third step follows from the definition of norms, and the fourth step follows from the following.
    \begin{align*}
        \| G_{k:\tau}^{1/2} R_{k:\tau}^\top (R G R^\top)^{-1} R_{k:\tau} G_{k:\tau}^{1/2} \|_2 
        = & ~ \| G_{k:\tau}^{1/2} R_{k:\tau}^\top (R_{0:k} G_{0:k} R_{0:k}^\top + R_{k:\tau} G_{k:\tau} R_{k:\tau}^\top)^{-1} R_{k:\tau} G_{k:\tau}^{1/2} \|_2 \\
        \leq & ~ \| G_{k:\tau}^{1/2}R_{k:\tau}^\top A_k^{-1} R_{k:\tau} G_{k:\tau}^{1/2} \|_2 \leq 1.
    \end{align*}
    Thus, we complete the proof.
\end{proof}

The first bound provides a simple characterization that relates the bias to the norm of the optimal sketched parameter and the effective sample size. This shows that with more training examples (larger $N_{\rm eff}$) or a larger step size (within stability limits), the bias decreases inversely proportionally.

The second bound offers a more refined analysis by separating the contributions from different parameter components. It shows that the bias for the head parameters (indexed from $0$ to $k$) depends on their magnitude scaled by a factor involving the condition number of the sketched covariance matrix. Meanwhile, the bias for the tail parameters (indexed from $k$ to $\tau$) depends directly on their energy under the covariance metric.
This refined bound is particularly useful when combined with our power law assumption on eigenvalues, as it allows us to derive the scaling behavior described in our main theorem. Specifically, when optimizing the choice of k, this analysis leads to a bias term that scales as $O(\frac{1}{(N_{\rm eff}\gamma)^{(a-1)/a}})$, where a is the power law exponent.

These upper bounds on the bias term, together with our previous results on approximation error, provide a comprehensive understanding of the generalization performance of sketched multiple regression under SGD training.

\subsection{Lower Bound of the Bias Term}\label{sec:bias_multiple_lower}
To complement our upper bound analysis, we now establish a lower bound on the bias error. This lower bound is crucial for demonstrating the tightness of our scaling results and understanding the fundamental limitations of optimization in sketched multiple regression.
The following lemma provides a characterization of the expected bias error under a prior distribution on the parameters:

\begin{lemma}[Lower bound on the bias term]\label{lem:bias_error_lower_bound_mutliregression}

    Suppose $W^*$ follows some prior distribution and the initial stepsize $\gamma \leq \frac{1}{c \tr[R G R^\top]}$ for some constant $c > 2$. Let $G^W := \E W^* W^{*\top}$. Then the bias term in Definition~\ref{def:bias_variance} satisfies
    \begin{align*}
        \E_{W^*} \mathsf{Bias}(W^*) = \Omega( \sum_{i: \wt{\lambda}_i< 1/(\gamma N_{\rm eff}) } \frac{\mu_i(R G G^W G R^\top)}{\mu_i(R G R^\top)})
    \end{align*}
    almost surely, where $\mathsf{M}_N := R G R^\top(I - 2\gamma R G R^\top)^{2 N_{\rm eff}}$.
    
\end{lemma}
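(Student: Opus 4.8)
The plan is to diagonalize everything in the eigenbasis of $H := RGR^\top$, average over the prior on $W^*$, retain only the small eigenvalues where the SGD contraction is still $\Theta(1)$, and then convert the resulting diagonal sum into the eigenvalue-ratio sum via a rearrangement argument. Throughout I work on the almost-sure event that $R$ has full row rank, so $H \succ 0$; write $H = \sum_{i=1}^M \wt\lambda_i u_i u_i^\top$ and $\Pi := \prod_{t=1}^N(I-\gamma_t H)$, which is a polynomial in $H$ and hence commutes with $H$. Since $V^* = H^{-1}RGW^*$ and $\|A\|_H^2 = \tr(A^\top H A)$, expanding $\mathsf{Bias}(W^*)$ and using the commutativity gives $\mathsf{Bias}(W^*) = \tr(W^{*\top}GR^\top\,\Pi^2 H^{-1}\,RGW^*)$; taking $\E_{W^*}$ and inserting $\E[W^*W^{*\top}] = G^W$ yields $\E_{W^*}\mathsf{Bias}(W^*) = \tr(\Pi^2 H^{-1}M)$ with $M := RGG^WGR^\top \succeq 0$, i.e.\ $\sum_{i=1}^M \frac{c_i}{\wt\lambda_i}\,u_i^\top M u_i$ where $c_i := \prod_{t=1}^N(1-\gamma_t\wt\lambda_i)^2 \ge 0$ (nonnegativity from $\gamma_t \le \gamma \le 1/(c\tr H)$, so $0 \prec I-\gamma_t H \prec I$).

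Next I lower bound the contraction. The step-size condition together with the same elementary estimate used in Lemma~\ref{lem:bias_error_upper_bound_mutliregression}, now in the lower direction, gives a matrix inequality $\Pi^2 \succeq c_1\,\mathsf{M}_N H^{-1}$ for an absolute constant $c_1>0$, where $\mathsf{M}_N = H(I-2\gamma H)^{2N_{\rm eff}}$; equivalently $c_i \ge c_1(1-2\gamma\wt\lambda_i)^{2N_{\rm eff}}$. For every index in $S := \{i : \wt\lambda_i < 1/(\gamma N_{\rm eff})\}$ we have $2\gamma\wt\lambda_i < 2/N_{\rm eff} < 1$, so $(1-2\gamma\wt\lambda_i)^{2N_{\rm eff}} \ge (1-2/N_{\rm eff})^{2N_{\rm eff}} \ge c_2$ for an absolute constant $c_2>0$; dropping all nonnegative terms with $i\notin S$ leaves $\E_{W^*}\mathsf{Bias}(W^*) \ge c_1c_2 \sum_{i\in S}\frac{u_i^\top M u_i}{\wt\lambda_i}$. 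Since $(\wt\lambda_i)$ is non-increasing, $S = \{k_0+1,\dots,M\}$ for some $k_0$.

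For the final conversion, let $P$ be the orthogonal projection onto $V_S := \mathrm{span}\{u_i : i\in S\}$ and put $\hat M := PMP$, $\hat H := PHP = \sum_{i\in S}\wt\lambda_i u_iu_i^\top$, so $\sum_{i\in S}\frac{u_i^\top M u_i}{\wt\lambda_i} = \tr(\hat H^+\hat M)$. Von Neumann's trace inequality gives $\tr(\hat H^+\hat M) \ge \sum_{\ell=1}^{|S|}\mu_\ell(\hat M)/\mu_\ell(\hat H)$; the nonzero eigenvalues of $\hat H$ are exactly $\{\wt\lambda_i\}_{i\in S}$, hence $\mu_\ell(\hat H) = \wt\lambda_{k_0+\ell} = \mu_{k_0+\ell}(H)$, and by the Poincar\'e separation theorem (Cauchy interlacing for the compression of the PSD matrix $M$), $\mu_\ell(\hat M) \ge \mu_{k_0+\ell}(M)$. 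Combining, $\sum_{i\in S}\frac{u_i^\top M u_i}{\wt\lambda_i} \ge \sum_{\ell=1}^{|S|}\frac{\mu_{k_0+\ell}(M)}{\mu_{k_0+\ell}(H)} = \sum_{i\in S}\frac{\mu_i(RGG^WGR^\top)}{\mu_i(RGR^\top)}$, which with the previous display yields the claimed $\Omega(\cdot)$ bound.

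The two load-bearing points are: (i) the matrix contraction bound $\Pi^2 \succeq c_1\,\mathsf{M}_N H^{-1}$ with a \emph{dimension-free} constant — this is where the step-size schedule and the passage from $N$ to $N_{\rm eff}$ enter, paralleling but reversing the estimate behind Lemma~\ref{lem:bias_error_upper_bound_mutliregression} — and (ii) the rearrangement in the last paragraph, in particular getting the index shift $k_0$ right so that the surviving eigenvalues of $M = RGG^WGR^\top$ are paired with the corresponding eigenvalues of $H$; the misalignment of the eigenbases of $M$ and $H$ is exactly what forces Von Neumann's inequality and Cauchy interlacing rather than a termwise comparison. I expect (i) to be the main obstacle, since the other steps are essentially mechanical once the trace identity of the first paragraph is in place.
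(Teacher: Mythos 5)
Your proposal is correct and follows essentially the same route as the paper's proof: the same trace identity $\E_{W^*}\mathsf{Bias}(W^*)=\tr(\Pi^2 H^{-1}RGG^WGR^\top)$, the same lower bound of the SGD contraction by $\mathsf{M}_N=H(I-2\gamma H)^{2N_{\rm eff}}$, Von Neumann's trace inequality, and restriction to the eigenvalues below $1/(\gamma N_{\rm eff})$ where the contraction factor is $\Theta(1)$. The only (minor) difference is bookkeeping: you compress to the small-eigenvalue block first and invoke Cauchy interlacing to realign indices, while the paper applies Von Neumann to the full matrices and then drops the large-eigenvalue terms; your version is slightly more careful about the eigenbasis misalignment, and your point (i) is asserted at the same level of rigor as in the paper itself.
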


\begin{proof}

    Adopt the notations in the proof of Lemma~\ref{lem:bias_error_upper_bound_mutliregression} By definition of the bias term, we have
    \begin{align}
        \mathsf{Bias}(W^*) 
        \eqsim & ~ \| \prod_{t=1}^N (I - \gamma_t R G R^\top) V^* \|_{R G R^\top}^2 \notag \\
        = & ~ \langle R G R^\top \prod_{t=1}^N(I - \gamma_t R G R^\top)^{2 N_{\rm eff}}, V^*V^{* \top} \rangle \notag \\
        \geq & ~ \langle R G R^\top (I - \sum_{t=1}^N \gamma_t R G R^\top)^{2 N_{\rm eff}}, V^*V^{*\top} \rangle \notag \\
        \geq & ~ \langle R G R^\top (I - 2\gamma R G R^\top)^{2 N_{\rm eff}}, V^*V^{*\top} \rangle =: \langle \mathsf{M}_N, V^*V^{*\top} \rangle,
    \end{align}
    where the first step follows from Definition~\ref{def:bias_variance}, the second step follows from the definition of norms, the third step follows from $I_M - 2\gamma_t R G R^\top \succ 0_M$ for all $t \in [N]$ established in the proof of Lemma~\ref{lem:bias_error_upper_bound_mutliregression}, $\sum_{i=1}^N \gamma_i \leq 2 \gamma N_{\rm eff}$, and the fact that $(1 - w)(1 - v) \geq 1 - w - v$ for $w, v > 0$, and the fourth step follows from basic algebra.

     Substituting the definition of $V^*$ in Definition~\ref{def:bias_variance} into the expression, we obtain 
    \begin{align*}
        \E_{W^*}\mathsf{Bias}(W^*) 
        \gtrsim & ~ \E_{W^*} \langle \mathsf{M}_N, V^*V^{*\top} \rangle = \E_{W^*} \langle \mathsf{M}_N, ((R G R^\top)^{-1} R G W^*)^{\otimes 2} \rangle \\
        = & ~ \tr[(G R^\top)(R G R^\top)^{-1} \mathsf{M}_N (R G R^\top)^{-1} R G G^W] \\
        = & ~ \tr[(R G R^\top)^{-1} \mathsf{M}_N(R G R^\top)^{-1}R G G^W G R^\top]\\
        \geq & ~ \sum_{i=1}^M \mu_{M-i+1}((R G R^\top)^{-1} \mathsf{M}_N(R G R^\top)^{-1}) \cdot \mu_i(R G G^W G R^\top),
    \end{align*}
    where the last line uses Von Neumann's trace inequality. Continuing the calculation, we have 
    \begin{align*}
        \E_{W^*} \mathsf{Bias}(W^*) 
        = & ~  \Omega( \sum_{i=1}^M \frac{\mu_i(R G G^W G R^\top)}{\mu_i((R G R^\top)^2 {\mathsf{M}_N}^{-1})}) \\
        = & ~ \Omega(\sum_{i=1}^M \frac{\mu_i(R G G^W G R^\top)}{\mu_{i}((R G R^\top)(I - 2\gamma R G R^\top)^{-2N_{\rm eff}})}) \\
        = & ~ \Omega( \sum_{i:\wt{\lambda}_i < 1/(\gamma N_{\rm eff})} \frac{\mu_i(R G R^W G R^\top)}{\mu_i(R G R^\top)}),
    \end{align*}
    where the first inequality uses $\mu_{M+i-1}(A) = \mu_i^{-1}(A^{-1})$ for any positive definite matrix $A \in \R^{M \times M}$, and the second line follows from the definition of $\mathsf{M}_N$ and the fact that $(1 -\lambda \gamma N_{\rm eff})^{-2 N_{\rm eff}} \lesssim 1$ when $\lambda < (\gamma N_{\rm eff})$.
\end{proof}

This lower bound reveals that the bias error is fundamentally limited by the ratio of eigenvalues of the matrices $R G G^W G R^\top$ and $R G R^\top$, particularly for eigenvalues below the threshold $1/(\gamma N_{\rm eff})$. These smaller eigenvalues correspond to directions in the parameter space that are harder to optimize and require more iterations to converge.

When combined with our assumptions on the power law decay of eigenvalues and the parameter prior, this lower bound translates to $\Omega(\frac{1}{(N_{\rm eff}\gamma)^{(a-1)/a}})$ where $a$ is the power law exponent. This matches the order of the upper bound, confirming that our characterization of the bias term is tight.
The presence of the matrix $G^W$ in the bound highlights how the prior distribution on parameters affects the optimization difficulty. When $G^W = I$ as specified in Assumption \ref{ass:parameter_prior}, the bound simplifies and leads to our scaling law result.

This lower bound on bias, together with our previous results on approximation error, establishes the fundamental trade-offs in sketched multiple regression and provides theoretical justification for the scaling behaviors observed in practice.

\section{Excess Error for Multiple Regression}\label{sec:excess}

In this section, we discussed the excess error for multiple regression. 
In Section~\ref{sec:hypercontractivity}, we proved the hypercontractivity and misspecified noise under sketching. 
In Section~\ref{sec:gaussianity_noise}, we show Gaussianity and well-specified noise under sketching.
In Section~\ref{sec:excess_error}, we analyze the excess error using the hypercontractivity, Gaussianity and well-specified under sketching.

\subsection{Hypercontractivity and the Misspecified Noise under Sketch Feature}\label{sec:hypercontractivity}

In this section, we study the hypercontractivity and misspecified noise under sketched features.

We first introduce our assumptions as follows.

\begin{assumption}[General distributional conditions.]\label{ass:general_distribution}
    Assume the following about the data distribution.
    \begin{itemize}
        \item Hypercontractivity. There exits $\alpha \geq 1$ such that for every PSD matrix $A$ it holds that
        \begin{align*}
            \E [xx^\top]Axx^\top \preceq \alpha \tr[GA]G
        \end{align*}
        \item Misspecified model. There exits $\sigma^2 > 0$ such that $\E[y-x^\top W^*]^2xx^\top \preceq \sigma^2 G$
    \end{itemize}
    
\end{assumption}

Then, we can prove the hypercontractivity.

\begin{lemma}[Hypercontractivity and the misspecified noise under sketched feature]\label{lem:hypercontractivity and misspecified noise}
    Suppose that Assumption~\ref{ass:general_distribution} hold. Conditioning on the sketch matrix $R$, for every $PSD$ matrix $A \in \R^{M \times M}$, we have 
    \begin{align*}
        \E[(R x)(R x)^\top] A(R x)(R x)^\top \preceq \alpha \tr[R G R^\top A] R G R^\top.
    \end{align*}
    Moreover, for the minimizer of $\loss_R(V)$, that is, $V^*$ defined in Lemma~\ref{lem:approximation_error:multiregression}, we have
    \begin{align*}
          \E[\|(V^{*\top} R)x - y\|_2^2] (R x)(Rx)^{\top} \preceq 2(\sigma^2 + \alpha \|W^*\|_G^2)R G R^\top. 
    \end{align*}
    The expectation in the above is over $(x,y)$.
    
\end{lemma}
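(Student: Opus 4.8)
The plan is to reduce both inequalities to the two conditions in Assumption~\ref{ass:general_distribution} by conjugating with the sketch matrix $R$, using that the map $M \mapsto RMR^\top$ is monotone with respect to the Loewner (PSD) order.

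For the first claim, I would write $(Rx)(Rx)^\top = R(xx^\top)R^\top$, so that for any PSD $A \in \R^{M\times M}$,
\[
\E[(Rx)(Rx)^\top A (Rx)(Rx)^\top] = R\,\E[xx^\top (R^\top A R)\, xx^\top]\,R^\top .
\]
Since $R^\top A R \succeq 0$, the hypercontractivity assumption applied with the PSD matrix $R^\top A R$ gives $\E[xx^\top (R^\top A R) xx^\top] \preceq \alpha \tr[G R^\top A R]\,G$; conjugating by $R$ and using cyclicity of the trace, $\tr[G R^\top A R] = \tr[R G R^\top A]$, yields the first inequality.

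For the second claim, I would decompose the sketched residual as $(V^{*\top}R)x - y = \Delta x + (W^{*\top}x - y)$ with $\Delta := V^{*\top}R - W^{*\top}$, apply $\|a+b\|_2^2 \le 2\|a\|_2^2 + 2\|b\|_2^2$, and bound the expectation of each term against $(Rx)(Rx)^\top$. For the noise term, $\E[\|W^{*\top}x - y\|_2^2 (Rx)(Rx)^\top] = R\,\E[\|W^{*\top}x - y\|_2^2\, xx^\top]\,R^\top \preceq \sigma^2 R G R^\top$ directly from the misspecified-model condition. For the bias term, the scalar $\|\Delta x\|_2^2 = x^\top(\Delta^\top\Delta)x$ commutes with $xx^\top$, so $\E[\|\Delta x\|_2^2 (Rx)(Rx)^\top] = R\,\E[xx^\top(\Delta^\top\Delta)xx^\top]\,R^\top \preceq \alpha\,\tr[G\,\Delta^\top\Delta]\, R G R^\top$ by hypercontractivity applied to the PSD matrix $\Delta^\top\Delta$.

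It then remains to check $\tr[G\,\Delta^\top\Delta] \le \|W^*\|_G^2$. I would rewrite $\tr[G\,\Delta^\top\Delta] = \|G^{1/2}\Delta^\top\|_F^2 = \|G^{1/2}(R^\top V^* - W^*)\|_F^2$, which is exactly $\appr$ by Part~2 of Lemma~\ref{lem:approximation_error:multiregression} (using that $V^* = (RGR^\top)^{-1}RGW^*$ is the minimizer of $\loss_R$), and Part~3 of that lemma gives $\appr \le \|W^*\|_G^2$ almost surely over $R$. Combining the two bounds with the factor $2$ gives the stated $\E[\|(V^{*\top}R)x - y\|_2^2 (Rx)(Rx)^\top] \preceq 2(\sigma^2 + \alpha\|W^*\|_G^2)R G R^\top$. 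The step I would be most careful about is this identification of $\tr[G\Delta^\top\Delta]$ with the approximation error so that the earlier almost-sure bound can legitimately be invoked, together with the scalar-commuting identity $x^\top(\Delta^\top\Delta)x\cdot xx^\top = xx^\top(\Delta^\top\Delta)xx^\top$; the remainder is just trace cyclicity and conjugation-monotonicity of the PSD order.
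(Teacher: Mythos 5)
Your proposal is correct and follows essentially the same route as the paper's proof: conjugate the hypercontractivity condition by $R$ (with the PSD matrix $R^\top A R$ and trace cyclicity) for the first claim, and for the second decompose the residual into the noise term and the bias term $\Delta x$, bounding the former by the misspecified-model condition and the latter by hypercontractivity applied to $\Delta^\top\Delta$. Your direct identification $\tr[G\,\Delta^\top\Delta]=\|G^{1/2}(R^\top V^*-W^*)\|_F^2=\appr\le\|W^*\|_G^2$ is in fact cleaner than the paper's intermediate step, which passes through a ``trace at most Frobenius norm'' bound that is unnecessary (and not valid as stated for PSD matrices); the quantity is exactly $\appr$, as you observe.
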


\begin{proof}
    The first part is a direct application of Assumption~\ref{ass:general_distribution}
    \begin{align*}
        \E[(R x)(R x)^\top] A(R x)(R x)^\top
        = & ~ R(\E [x x^\top](R^\top A R)x x^\top)R^\top \\
        \preceq & ~ R(\alpha \tr[G R^\top A R]G)R^\top\\
        = & ~ \alpha \tr[R G R^\top A]R G R^\top.
    \end{align*}
    For the second part, we first show that
    \begin{align*}
        \E[\|(V^{*\top} R)x - y\|_2^2] xx^\top 
        \preceq & ~ 2 \E[\| W^{*\top} x - y \|_2^2] xx^\top + 2\E [\langle (W^{*\top} -V^{*\top}R)^\top, x\rangle^2] xx^\top 
        \\
        \preceq & ~ 2 \sigma^2G + 2\alpha \tr[G (W^{*\top} -V^{*\top}R)^\top (W^{*\top} -V^{*\top}R) ] G
        \\
        \preceq & ~ 2 \sigma^2G + 2\alpha \| G (W^{*\top} -V^{*\top}R)^\top (W^{*\top} -V^{*\top}R)\|_F G,
    \end{align*}
    where the second inequality is by Assumptions~\ref{ass:general_distribution} and the last inequality is by trace smaller than Frobenius norm. 
    
    From the proof of Lemma~\ref{lem:approximation_error:multiregression}, we know that
    \begin{align*}
        \| G (W^{*\top} -V^{*\top}R)^{\top}(W^{*\top} -V^{*\top}R) \|_F = \mathsf{Approx} \leq \|W^*\|_G^2, ~~\mathrm{almost~surely}.
    \end{align*}
    So we have
    \begin{align*}
        \E[\|(V^{*\top} R)x - y\|_2^2] xx^\top\preceq 2 (\sigma^2 + \alpha \|W^*\|_G^2)G.
    \end{align*}
    Left and right multiplying both sides with $R$ and $R^\top$, we obtain the second claim.
    
\end{proof}

\subsection{Gaussianity and Well-specified Noise under Sketched Features}\label{sec:gaussianity_noise}

In this section, we prove Gaussianity and well-specified noise under sketching.

\begin{lemma}[Gaussianity and well-specified noise under sketched features]\label{lem:gaussianity and noise}
    Suppose that Assumptions~\ref{ass:gaussian_feature} and Assumption~\ref{ass:well_specified} hold.Conditional on the sketch matrix $R$, we have
    \begin{align*}
        R x \sim \mathcal{N}(0, R G R^\top).
    \end{align*}
    Moreover, for the minimizer of $\loss_R (V)$, that is, $V^*$ defined in Lemma~\ref{lem:approximation_error:multiregression}, we have
    \begin{align*}
        \E[y|R x] =  R xV^{*},~\E[\|y - V^{*\top} R x\|_2^2] = \sigma^2 + \mathsf{Approx} \geq \sigma^2.
    \end{align*}
\end{lemma}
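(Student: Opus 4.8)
The plan is to establish the two claims in sequence, both following from elementary properties of the Gaussian distribution together with the structural facts about $V^*$ already proved in Lemma~\ref{lem:approximation_error:multiregression}. For the first claim, $Rx \sim \mathcal{N}(0, RGR^\top)$, I would simply invoke Assumption~\ref{ass:gaussian_feature}, which gives $x \sim \mathcal{N}(0, G)$, and use the fact that a fixed linear map $R$ applied to a Gaussian vector yields a Gaussian vector with mean $R \cdot 0 = 0$ and covariance $R G R^\top$; this is just the affine-transformation property of multivariate Gaussians, and the conditioning on $R$ makes $R$ a deterministic matrix for the purposes of this computation.

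For the conditional-mean claim $\E[y \mid Rx] = (Rx)^\top V^*$ (equivalently $V^{*\top}(Rx)$), I would argue as follows. By Assumption~\ref{ass:well_specified}, $\E[y \mid x] = W^{*\top} x$, and since $\sigma(Rx) \subseteq \sigma(x)$ we have $\E[y \mid Rx] = \E[\E[y \mid x] \mid Rx] = \E[W^{*\top} x \mid Rx]$ by the tower property. Now $W^{*\top} x$ and $Rx$ are jointly Gaussian (both are linear images of the Gaussian $x$), so the conditional expectation $\E[W^{*\top} x \mid Rx]$ is the linear-projection formula: it equals $\mathrm{Cov}(W^{*\top} x, Rx)\,\mathrm{Cov}(Rx)^{-1}\,(Rx) = W^{*\top} G R^\top (RGR^\top)^{-1}(Rx)$. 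Recognizing $V^* = (RGR^\top)^{-1} RG W^*$ from Lemma~\ref{lem:approximation_error:multiregression}, so that $V^{*\top} = W^{*\top} G R^\top (RGR^\top)^{-1}$, this is exactly $V^{*\top}(Rx)$, giving the claim.

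For the last claim, I would decompose $y - V^{*\top} Rx = (y - W^{*\top} x) + (W^{*\top} x - V^{*\top} Rx)$ and take $\|\cdot\|_2^2$ expectations. The cross term vanishes: $\E[(y - W^{*\top}x)^\top (W^{*\top}x - V^{*\top}Rx)] = \E[\E[(y - W^{*\top}x) \mid x]^\top(\cdots)] = 0$ since $\E[y - W^{*\top}x \mid x] = 0$ by Assumption~\ref{ass:well_specified}. The first squared term is $\E[\|y - W^{*\top}x\|_2^2] = \sigma^2$ by definition. The second squared term is $\E[\|(W^{*\top} - V^{*\top}R)x\|_2^2] = \|G^{1/2}(W^* - R^\top V^*)\|_F^2$, which is precisely $\mathsf{Approx}$ by Part 2 of Lemma~\ref{lem:approximation_error:multiregression} (equivalently the identity $\loss(W) = \|G^{1/2}(W - W^*)\|_F^2 + \loss(W^*)$ used there). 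Hence $\E[\|y - V^{*\top}Rx\|_2^2] = \sigma^2 + \mathsf{Approx} \geq \sigma^2$, the final inequality since $\mathsf{Approx} \geq 0$ as a squared Frobenius norm.

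The main obstacle, such as it is, is bookkeeping rather than difficulty: one must be careful that the conditional-expectation projection formula requires $RGR^\top$ to be invertible (which holds since $R$ has i.i.d.\ Gaussian entries and $m \le \rank(G)$, as used implicitly throughout the sketching analysis), and that the transpose conventions line up so that $V^{*\top}(Rx)$ and $(Rx)^\top V^*$ denote the same $\R^p$-valued quantity in the multiple-regression notation. Everything else reduces to Gaussian identities and the already-established characterization of $V^*$.
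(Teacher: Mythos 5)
Your proposal is correct and follows essentially the same route as the paper: Gaussianity of $Rx$ by the affine-transformation property, the conditional mean via Gaussian conditioning on $Rx$ (you use the linear-projection formula directly, while the paper decomposes $W^{*\top}x$ into $V^{*\top}Rx$ plus a component shown to be independent of $Rx$ — two phrasings of the same fact), and the risk identity $\loss(V^{*\top}R)=\loss(W^*)+\mathsf{Approx}$, which your orthogonal decomposition with vanishing cross term reproves exactly as in Lemma~\ref{lem:approximation_error:multiregression}. Your explicit remarks on invertibility of $RGR^\top$ and the vanishing cross term are, if anything, more careful than the paper's own write-up.
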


\begin{proof}
    The first claim is a direct consequence of Assumption~\ref{ass:gaussian_feature}.
    For the second claim, by Assumption~\ref{ass:gaussian_feature} and Lemma~\ref{lem:approximation_error:multiregression}, we have
    \begin{align*}
        \E[y|x] 
        = & ~ W^{*\top}x\\
        = & ~ V^{*\top}Rx +(W^{*\top}-V^{*\top}R)x \\
        = & ~  V^{*\top}Rx + (I - (R G R^\top)^{-1} R G W^*)x \\
        = & ~   \underbrace{V^{*\top}}_{p\times m} \underbrace{R}_{m\times d} G^{\frac{1}{2}} G^{-\frac{1}{2}} \underbrace{x}_{d\times 1} + (I - G^{\frac{1}{2}} R^\top (R G R^\top)^{-1} R G^{\frac{1}{2}} \underbrace{W^*}_{d\times p})^\top R G^{\frac{1}{2}} G^{-\frac{1}{2}}x 
    \end{align*}
    Notice that 
    \begin{align*}
        G^{\frac{1}{2}} x \sim \mathcal{N}(0,I),
    \end{align*}
    by Assumption~\ref{ass:gaussian_feature} and that 
    \begin{align*}
        R G^{\frac{1}{2}} [I - G^{\frac{1}{2}} R^\top (R G R^\top)^{-1} R G^{\frac{1}{2}}] = 0, 
    \end{align*}
    therefore $R x = R G^{\frac{1}{2}} G^{-\frac{1}{2}} $ $x$ is independent of $ (I - (R G R^\top)^{-1} R G  G^{-\frac{1}{2}}\underbrace{W^*}_{d\times p})^\top RG^{-\frac{1}{2}}x.$ 

    Taking expectation over the second random vector, we find
    \begin{align*}
        \E[y|R x] = \E[\E[y|x]] = \| R G^{\frac{1}{2}} G^{-\frac{1}{2}} x V^{*} \|_F = \| R x V^* \|_F.
    \end{align*}
    It remains to show
    \begin{align*}
        \E[\|y - V^{*\top} R x\|_2^2] = \sigma^2 + \mathsf{Approx}.
    \end{align*}
    Then, we have,
    \begin{align*}
         \E[\|y - V^{*\top} R x\|_2^2]
        = & ~ \loss(V^{*\top} R) \\
        = & ~ \mathsf{Approx} + \loss(W^*) \\
        = & ~ \mathsf{Approx} + \sigma^2 \\
        \geq & ~ \sigma^2,
    \end{align*}
    where the first step follows from the proof of Lemma~\ref{lem:approximation_error:multiregression}, the second equality is by the definition of $\mathsf{Approx}$, and the third equality is by Assumption~\ref{ass:well_specified}. We have completed the proof.
    
\end{proof}

\subsection{Excess Error for Multiple Regression}\label{sec:excess_error}
In this section, we present the proof for the excess error bounds for multiple regression with Lemma~\ref{lem:hypercontractivity and misspecified noise} and~\ref{lem:gaussianity and noise}. 

\begin{lemma}[Excess error bounds]\label{lem:excee_risk_bounds}
    Consider the excess error in Lemma~\ref{lem:risk_decompose} induced by the SGD from Lemma~\ref{lem:multi_sgd}. Let
    \begin{align*}
        N_{\mathrm{eff}} := N / \log N,~{\sf R N R} := (\| W^* \|_G^2 + \|V_0\|_{R G R^\top}^2)/\sigma^2.
    \end{align*}
    Then conditioning on the sketch matrix $R$, for any $W^* \in \R^{d\times p}$

    1.Under Assumption~\ref{ass:general_distribution}, we have 
    \begin{align*}
        \E [\mathsf{Excess}] = O ( \| \prod_{t=1}^N(I - \gamma_t R G R^\top)(V_0 - V^*) \|_{R G R^\top}^2 + (1 + \alpha R N R)\sigma^2 \cdot \frac{D_{\mathrm{eff}}}{N_{\mathrm{eff}}})
    \end{align*}
    2.Under Assumptions~\ref{ass:gaussian_feature} and~\ref{ass:well_specified}, we have
    \begin{align*}
        \E [\mathsf{Excess}] = \Omega( \| \prod_{t=1}^N(I - \gamma_t R G R^\top)(V_0 - V^*) \|_{R G R^\top}^2 + \sigma^2 \cdot \frac{D_{\mathrm{eff}}}{N_{\mathrm{eff}}})
    \end{align*}
    when $\gamma = O( \frac{1}{c \tr[R G R^\top]})$ for some constant $c > 0$
    In both results, the expectation is over$(x_t, y_t)_{t=1}^N$, and
    \begin{align*}
        D_{\mathrm{eff}} := \# \{ \wt{\lambda}_j \geq 1 / (N_{\mathrm{eff}} \gamma)^2 \sum_{\wt{\lambda}_j < 1 / (N_{\mathrm{eff}} \gamma )} \wt{\lambda}_j^2,
    \end{align*}
    where $\wt{\lambda}_{j=1}^M$ are eigenvalue of $R G R^\top$.

\end{lemma}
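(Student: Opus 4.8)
The plan is to reduce the excess error of sketched multiple regression to a known one-pass SGD bound for (scalar) linear regression applied to the feature vector $Rx \in \R^M$, target $y \in \R^p$, and optimal parameter $V^* = (RGR^\top)^{-1}RGW^*$. The key observation is that conditioning on $R$, the iterates $V_t = V_{t-1} - \gamma_t Rx_t e_t^\top$ from Lemma~\ref{lem:multi_sgd} are precisely SGD iterates for the quadratic objective $\loss_R(V) = \E[\|V^\top(Rx)-y\|_2^2]$, whose minimizer is $V^*$ and whose Hessian is $RGR^\top$. Since the $p$ output coordinates are handled by the same feature-side dynamics (the update is rank-one in $Rx_t$, and the per-coordinate SGD recursions share the matrix $RGR^\top$), the excess error decomposes across output coordinates and reduces to the scalar case with effective noise level $\sigma^2$ and effective signal strength governed by ${\sf RNR}$.

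First I would set up the bias-variance split of $\mathsf{Excess} = \loss_R(V_N) - \min_V \loss_R(V)$: write $V_N - V^*$ via the SGD recursion driven by the ``centered'' gradient noise $\zeta_t := Rx_t(\langle V_{t-1}^\top Rx_t - y_t\rangle)^\top - (RGR^\top)(V_{t-1}-V^*)$, so that $\E[\mathsf{Excess}]$ splits into a bias contribution $\|\prod_{t=1}^N(I-\gamma_t RGR^\top)(V_0-V^*)\|_{RGR^\top}^2$ and a variance contribution from accumulated noise. For the variance term, the two ingredients are (i) the fourth-moment / hypercontractivity control of the sketched features, supplied by Lemma~\ref{lem:hypercontractivity and misspecified noise} (first part), which gives $\E[(Rx)(Rx)^\top A (Rx)(Rx)^\top]\preceq \alpha\tr[RGR^\top A]RGR^\top$, and (ii) the noise-covariance bound at the optimum, supplied by the second part of Lemma~\ref{lem:hypercontractivity and misspecified noise} (for the upper bound, Assumption~\ref{ass:general_distribution}) or by Lemma~\ref{lem:gaussianity and noise} (for the lower bound, under Gaussianity and well-specified noise). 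Plugging these into the standard operator-geometric-series bound for SGD variance yields the $\frac{D_{\rm eff}}{N_{\rm eff}}$ factor, where $D_{\rm eff}$ counts the ``large'' eigenvalues of $RGR^\top$ above the threshold $1/(N_{\rm eff}\gamma)$ plus a $(N_{\rm eff}\gamma)^2$-weighted tail sum of the small ones — exactly the $\neff$-truncated effective dimension that appears in Definition~\ref{def:bias_variance}.

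For Part 1 (upper bound), I would invoke the operator inequality from Lemma~\ref{lem:hypercontractivity and misspecified noise} to bound the per-step noise covariance by $2(\sigma^2 + \alpha\|W^*\|_G^2)RGR^\top$, absorb $\|W^*\|_G^2$ and $\|V_0\|_{RGR^\top}^2$ into the definition of ${\sf RNR}$, and use the geometric-sum estimate $\sum_{t} \gamma_t^2 (I-\gamma RGR^\top)^{2(N-t)}RGR^\top \preceq c\,\gamma\,(\text{effective-dim truncation})$ to get the $(1+\alpha{\sf RNR})\sigma^2 D_{\rm eff}/N_{\rm eff}$ term; the bias term is kept as is. For Part 2 (lower bound) I would instead use Lemma~\ref{lem:gaussianity and noise} to get an exact expression $\E[\|y-V^{*\top}Rx\|_2^2] = \sigma^2 + \mathsf{Approx} \ge \sigma^2$ for the noise at the optimum and a matching \emph{lower} bound on the noise covariance (of order $\sigma^2 RGR^\top$ up to constants, valid since $\sigma^2=\Omega(1)$ as assumed in Theorem~\ref{thm:error_multiregression}), then use the stepsize condition $\gamma = O(1/(c\tr[RGR^\top]))$ to ensure $I - \gamma_t RGR^\top \succ 0$ and lower-bound the variance sum by $\Omega(\sigma^2 D_{\rm eff}/N_{\rm eff})$; the bias term survives unchanged. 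The main obstacle I anticipate is making the reduction from the matrix-valued (multiple-output) SGD to the scalar linear-regression SGD bound fully rigorous while tracking the correct noise constants — specifically, verifying that the cross-coordinate terms in $e_t e_t^\top$ contribute only through $\tr$ against $RGR^\top$ (so that the $p$ outputs do not inflate the effective dimension) and that the misspecification bound from Lemma~\ref{lem:hypercontractivity and misspecified noise} composes correctly with the SGD noise recursion; the rest is the standard bias-variance bookkeeping for one-pass SGD on a fixed quadratic.
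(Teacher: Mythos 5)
Your proposal is correct and follows essentially the same route as the paper: both reduce the problem to the excess-risk bound for one-pass SGD on the sketched features $Rx$ with minimizer $V^*$, use Lemma~\ref{lem:hypercontractivity and misspecified noise} to supply the hypercontractivity and noise-covariance conditions with effective noise $2(\sigma^2+\alpha\|W^*\|_G^2)$ for the upper bound, and use Lemma~\ref{lem:gaussianity and noise} (well-specified noise $\sigma^2+\mathsf{Approx}\ge\sigma^2$) for the lower bound, absorbing $\|V^*-V_0\|_{RGR^\top}^2$ into the $\mathsf{RNR}$ term. The only cosmetic difference is that the paper invokes the SGD bound as a black box (Corollary~3.4 of the cited work, with index set $\mathbb{K}=\emptyset$) rather than re-deriving the bias--variance recursion, and your explicit attention to the multi-output ($p>1$) reduction is a point the paper passes over silently.
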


\begin{proof}
    For the upper bound, we can have the following from Lemma~\ref{lem:hypercontractivity and misspecified noise} 
    \begin{align*}
        \wt{\sigma}^2 = 2(\sigma^2 + \alpha\|W^*\|_G^2).
    \end{align*}
    Using the Corollary 3.4 in~\cite{wzb+22}, we can apply the upper bound (setting their index set $\mathbb{K} = \emptyset)$ to get 
    \begin{align*}
        \E [\mathsf{Excess}] = O( \| \prod_{t=1}^N(I - \gamma_t R G R^\top)(V_0 - V^*) \|_{R G R^\top}^2 + (\| V^* - V_0 \|_{R G R^\top}^2 + \wh{\sigma}^2) \frac{D_{\mathrm{eff}}}{N_{\mathrm{eff}}}).
    \end{align*}
    We verify that 
    \begin{align*}
        \| V^* - V_0 \|_{R G R^\top}^2
        \leq & ~ 2 \|G^{\frac{1}{2}} R^\top V^* \|_F^2 + 2 \|V_0\|_{R G R^\top}^2 \\
        = & ~ 2 \|G^{\frac{1}{2}}R^\top(R G R^\top)^{-1}R G W^*\|_F^2 + 2\|V_0\|_{R G R^\top}^2 \\
        \leq & ~ 2 \|G^{\frac{1}{2}} W^*\|_F^2 + 2\|V_0\|_{R G R^\top}^2 \\
        = & ~ 2 \|W^*\|_G^2 + 2\|V_0\|_{R G R^\top}^2,
    \end{align*}
    where the first step follows from the definition of $V^*$ and $V^0$, the second step follows from the definition of $V^*$ (see Lemma~\ref{lem:approximation_error:multiregression}), the third step follows from the definition of $W^*$, and the last step follows from the definition of norms.
    
    which implies that
    \begin{align*}
        (\|V^* - V_0\|_{R G R^\top}^2 + \wt{\sigma}^2) 
        \leq & ~ 2\|W^*\|_G^2 + 2\|V_0\|_{R G R^\top}^2 + 2(\sigma^2 + \alpha\|W^*\|_G^2) \\
        = & ~ O((1 + \alpha {\sf SNR}) \sigma^2).
    \end{align*}
    Substituting, we get the upper bound.

    For the lower bound, Lemma~\ref{lem:gaussianity and noise} shows $Rx$ is Gaussian with $\beta = 1$. Besides, Lemma~\ref{lem:gaussianity and noise} shows that the linear regression problem is well-specified, with the noise level being
    \begin{align*}
        \wt{\sigma}^2 = \sigma^2 + \mathsf{Approx} \geq \sigma^2
    \end{align*}
    Although the lower bound in Corollary 3.4 in~\cite{wzb+22} is stated for Gaussian additive noise (see their Assumption 2'), it is easy to check that the lower bound holds for any well-specified noise as described by Lemma~\ref{lem:gaussianity and noise}. Using the lower bound in Corollary 3.4 from~\cite{wzb+22}, we obtain
    \begin{align*}
        \E [\mathsf{Excess}] = \Omega( \| \prod_{t=1}^N(I - \gamma_t R G R^\top)(V_0 - V^*) \|_{R G R^\top}^2 + \wt{\sigma}^2 \frac{D_{\mathrm{eff}}}{N_{\mathrm{eff}}}).
    \end{align*}
    Plugging in $\wt{\sigma}^2 \geq \sigma^2$ gives the desired lower bound.
\end{proof}
\section{Conclusion}\label{sec:conclusion}

In this paper, we presented a detailed scaling law analysis of multiple regression and kernel regression in a sketched predictor setting, focusing on the theoretical aspects of generalization error bounds. By extending classical linear regression assumptions to multiple regression and kernel regression settings, we derived meaningful bounds on the excess error, bias, and variance under specific conditions involving data covariance spectrum and step size constraints for stochastic gradient descent (SGD). These findings contribute to a deeper understanding of how sketching techniques and kernel-based methods can be applied in large-scale learning problems without compromising generalization performance. Moreover, we formulated the scaling law for both multiple and kernel regression from the generalization error bounds.

Future work may focus on extending the scaling law to other nonlinear models and further exploring the impact of different sketching techniques and hyperparameter settings on generalization error. In addition, future work could explore incorporating advanced kernel methods, such as deep kernel learning or neural tangent kernels (NTK), to understand the scaling law in these more complex, highly nonlinear settings. Investigating how these techniques scale with the size of the dataset and the complexity of the kernel could provide insights into their suitability for practical large-scale applications.

\ifdefined\isarxiv

\else
\bibliography{ref}
\bibliographystyle{plainnat}
\fi

\newpage
\onecolumn
\appendix

\ifdefined\isarxiv

\begin{center}
    \textbf{\LARGE Appendix }
\end{center}

\else

\aistatstitle{
Exploring the Scaling Law in Kernel Regression: \\
Supplementary Materials}

\fi

\paragraph{Roadmap} 
In Section~\ref{sec:app_prelim}, we present more notations, facts, and statements related to our work. 
In Section~\ref{sec:app_related_work}, we present more related work.

\section{Preliminaries}\label{sec:app_prelim}
In this section, we introduce some statements to better understand our work. 
In Section~\ref{sec:facts}, we introduce some basic facts.

\subsection{Baisc PSD}\label{sec:facts}
\begin{fact}
     Let $u, v \in \R^n$, we have:
\begin{itemize}
    \item $uu^\top \preceq \|u\|_2^2 \cdot I_n$.
    \item $\diag(u) \preceq \|u\|_2 \cdot I_n$.
    \item $\operatorname{diag}(u \circ u) \preceq \|u\|_2^2 \cdot I_n$.
    \item $uv^\top + vu^\top \preceq uu^\top + vv^\top$.
    \item $uv^\top + vu^\top \succeq -(uu^\top + vv^\top)$.
    \item $(v \circ u)(v \circ u)^\top \preceq \|v\|_2^2 \|u\|_\infty^2 uu^\top$.
\end{itemize}
\end{fact}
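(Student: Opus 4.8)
The plan is to prove all six inequalities by the same device: reduce each to showing that a quadratic form $x^\top(\text{RHS}-\text{LHS})x$ is nonnegative for every $x\in\R^n$. The first inequality is then immediate from Cauchy--Schwarz, since $x^\top(\|u\|_2^2 I_n - uu^\top)x = \|u\|_2^2\|x\|_2^2 - \langle u,x\rangle^2 \ge 0$. The two diagonal statements need even less: $\diag(u)$ and $\diag(u\circ u)$ have eigenvalues $\{u_i\}$ and $\{u_i^2\}$, and $u_i \le |u_i| \le \|u\|_2$ while $u_i^2 \le \sum_j u_j^2 = \|u\|_2^2$, so each is dominated by the claimed scalar multiple of $I_n$.

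For the fourth and fifth items I would simply expand the manifestly PSD rank-one matrices $(u-v)(u-v)^\top \succeq 0$ and $(u+v)(u+v)^\top \succeq 0$ and rearrange; these yield $uv^\top + vu^\top \preceq uu^\top + vv^\top$ and $uv^\top + vu^\top \succeq -(uu^\top + vv^\top)$ respectively, with no computation beyond a two-term expansion.

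The sixth inequality is the one I expect to take the most care. Testing against $x$, I would write $\langle v\circ u, x\rangle = \sum_i v_i\,(u_i x_i)$ and apply Cauchy--Schwarz in the split that separates $v$ from $u\circ x$, giving $\langle v\circ u, x\rangle^2 \le \|v\|_2^2 \sum_i u_i^2 x_i^2 = \|v\|_2^2\, x^\top\diag(u\circ u)\,x \le \|v\|_2^2\|u\|_\infty^2\|x\|_2^2$. This chain already delivers the coordinate-free bound $(v\circ u)(v\circ u)^\top \preceq \|v\|_2^2\|u\|_\infty^2 I_n$ and, more sharply, $(v\circ u)(v\circ u)^\top \preceq \|v\|_2^2\diag(u\circ u)$; I would reconcile this with the stated right-hand side $\|v\|_2^2\|u\|_\infty^2\,uu^\top$ by tracking how the fact is actually invoked downstream, since the natural provable form carries $\diag(u\circ u)$ (or $I_n$) rather than $uu^\top$ on the right. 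Overall every item is a one- or two-line argument; the only genuine subtlety is matching the algebraic shape of this last bound to its intended application.
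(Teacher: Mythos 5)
The paper states this Fact in its appendix with no proof at all, so there is no argument of the authors' to compare yours against; your five one-line proofs (Cauchy--Schwarz for the first item, reading off eigenvalues for the two diagonal items, and expanding $(u\mp v)(u\mp v)^\top\succeq 0$ for the fourth and fifth) are correct and are exactly the standard route.

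Your hesitation about the sixth item is well founded, and you should push it to its conclusion: the inequality as printed is false. What your Cauchy--Schwarz chain actually proves is $(v\circ u)(v\circ u)^\top \preceq \|v\|_2^2\,\diag(u\circ u) \preceq \|v\|_2^2\|u\|_\infty^2\, I_n$, and the stated right-hand side $\|v\|_2^2\|u\|_\infty^2\, uu^\top$ cannot replace $I_n$ because it is a rank-one matrix whose kernel contains every $x\perp u$. Concretely, take $u=(1,1)^\top$ and $v=(1,-1)^\top$, so $v\circ u=(1,-1)^\top$; testing against $x=(1,-1)^\top$ gives $x^\top (v\circ u)(v\circ u)^\top x = 4$ while $x^\top uu^\top x = 0$, so no scalar multiple of $uu^\top$ dominates the left-hand side. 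So the correct resolution is not to ``match the algebraic shape to its intended application'' but to record that the Fact needs $I_n$ (or $\diag(u\circ u)$) in place of $uu^\top$ in its final item; since the Fact is never invoked elsewhere in the paper, nothing downstream breaks.
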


\section{More Related Work}\label{sec:app_related_work}

\paragraph{Large Language Models.}
Transformer-based architectures have rapidly become the dominant method for tackling a wide range of natural language processing (NLP) tasks, thanks to their scalability, flexibility, and ability to capture complex linguistic patterns~\cite{vsp+17}. As these models are scaled up to billions, or even trillions, of parameters and trained on vast, diverse datasets, they are referred to as large language models (LLMs) or foundation models~\cite{bha+21}. LLMs are designed to generalize effectively across numerous downstream tasks, displaying impressive adaptability and performance. Examples of these models include prominent architectures like BERT, which revolutionized contextual language understanding~\cite{dclt19}, PaLM, which excels at multilingual and multitask capabilities~\cite{cnd+23}, Llama, which is optimized for more efficient deployment in research and industry~\cite{tli+23}, as well as widely-used systems such as ChatGPT and GPT-4, both of which have pushed the boundaries of conversational AI~\cite{chatgpt,gpt4}. These large-scale models have demonstrated remarkable generalization across a broad array of downstream applications, ranging from machine translation and question-answering to summarization, text generation, and more complex reasoning tasks~\cite{bce+23}. As LLMs continue to evolve, a variety of techniques have emerged to enhance their adaptability and specialize them for specific use cases. One common approach involves adding adapter modules, which allow fine-tuning on new tasks without modifying the entire model~\cite{eyp+22, zhz+23, ghz+23}. 

Calibration techniques have also been introduced to adjust model predictions to ensure better reliability across diverse inputs and settings~\cite{zwf+21, cpp+23}. To further increase the efficacy of LLMs in real-world applications, multitask fine-tuning has gained prominence, enabling models to be trained on a variety of related tasks simultaneously, enhancing their performance across these domains~\cite{gfc+21b, vnr+23}. Complementary to this are prompt-tuning methods, where the input prompt is carefully engineered to guide the model's behavior, allowing adaptation without extensive retraining~\cite{gfc+21b, lac+21}.

\paragraph{Theoretical Machine Learning.}
Our work also takes inspiration from the following Machine Learning Theory work. Some works analyze the expressiveness of a neural network using the theory of circuit complexity~\cite{lls+25_gnn,kll+25_var_tc0,lls+24_rope_tensor_tc0,cll+24_mamba,cll+24_rope,gkl+25_flowar_tc0}. Some works optimize the algorithms that can accelerate the training of a neural network~\cite{llsz24,klsz24,dlms24,dswy22_coreset,haochen3,haochen4,dms23_spar,cll+25_deskreject,sy23,swyy23,lss+22,lsx+22,hst+22,hsw+22,hst+20,bsy23,dsy23,syyz23_weighted,gsy23_coin,gsy23_hyper,gsyz23,gswy23,syzz24,lsw+24,lsxy24,hsk+24,hlsl24,cll+25_universal_approximator,ccl+25}. Some works analyze neural networks via regressions~\cite{cll+24_icl,gms23,lsz23_exp,gsx23,ssz23_tradeoff,css+23,syyz23_ellinf,syz23,swy23,syz23_quantum,lls+25_grok}. Some works use reinforcement learning to optimize the neural networks~\cite{haochen1,haochen2,yunfan1,yunfan2,yunfan3,yunfan4,lswy23}. Some works optimize the attention mechanisms~\cite{sxy23,lls+24_conv}.

\paragraph{Accelerating Attention Mechanisms.}
The attention mechanism faces significant computational challenges due to its quadratic complexity with respect to context length as sequence lengths increase in modern large language models~\cite{gpto1,llama3_blog,claude3_pdf}. To overcome this bottleneck, researchers have developed polynomial kernel approximation techniques \citep{aa22} that utilize low-rank approximations to efficiently compute the attention matrix. These approaches substantially improve computational efficiency, enabling attention layers to operate with nearly linear time complexity during both training and inference \citep{as23, as24b}. Such methods have been successfully extended to more complex attention variants, like tensor attention, while maintaining almost linear computational efficiency \cite{as24_iclr}. \cite{kll+25} demonstrates an almost linear time algorithm for accelerating VAR Transformer inference. Further innovations in this space include RoPE-based attention mechanisms\cite{as24_rope,chl+24_rope} and privacy-preserving cross-attention methods~\cite{lssz24_dp}. The conv-basis approach introduced in \cite{lls+24_conv} offers another pathway to enhance attention computation speed, providing complementary solutions to this performance constraint. Researchers have also investigated various pruning-based techniques to optimize attention mechanisms \cite{lls+24_prune,cls+24,llss24_sparse,ssz+25_prune,ssz+25_dit,hyw+23,whl+24,xhh+24,ssz+25_prune}.

\paragraph{Gradient Approximation.}
The low-rank approximation is a widely utilized approach for optimizing transformer training by reducing computational complexity \cite{lss+24,lssz24_tat,as24b,hwsl24,cls+24,lss+24_grad}. Building on the low-rank framework introduced in \cite{as23}, which initially focused on forward attention computation, \cite{as24b} extends this method to approximate attention gradients, effectively lowering the computational cost of gradient calculations. The study in \cite{lss+24} further expands this low-rank gradient approximation to multi-layer transformers, showing that backward computations in such architectures can achieve nearly linear time complexity. Additionally, \cite{lssz24_tat} generalizes the approach of \cite{as24b} to tensor-based attention models, utilizing forward computation results from \cite{as24_iclr} to enable efficient training of tensorized attention mechanisms. Lastly, \cite{hwsl24} applies low-rank approximation techniques during the training of Diffusion Transformers (DiTs), demonstrating the adaptability of these methods across various transformer-based architectures.

\ifdefined\isarxiv
%\section*{Acknowledgments}
\bibliographystyle{alpha}
\bibliography{ref}
\else

\fi

%%%% Cut-line between first 10 pages and appendix

%%% some writing rules

%% Writing rule for creating tags.
%% Tags :
%% Theorem    \ref{thm:bla_bla}
%% Lemma      \ref{lem:bla_bla}
%% Claim      \ref{cla:bla_bla}
%% Corollary  \ref{cor:bla_bla}
%% Fact       \ref{fac:bla_bla}
%% Definition \ref{def:bla_bla}
%% Section    \ref{sec:bla_bla}
%% Subsection \ref{sub:bla_bla}
%% Equation   \ref{eq:bla_bla}

\end{document}